\definecolor{Green}{rgb}{0.13, 0.65, 0.3}
\definecolor{Amber}{rgb}{0.3, 0.5, 1.0}
\newcommand{\tr}{\mathrm{Tr}}
\newcommand{\Alg}{\textit{MED-PE}}
\newcommand{\sign}{\texttt{sign}}
\newcommand{\bbR}{\mathbb{R}}
\newcommand{\bi}{\begin{itemize}}
\newcommand{\ei}{\end{itemize}}
\newtheorem{theorem}{Theorem}
\newtheorem*{theorem*}{Theorem}
\newtheorem{lemma}{Lemma}
\newtheorem{corollary}{Corollary}
\newtheorem{remark}{Remark}
\newenvironment{sketch}{%
  \proof}{\endproof}
\DeclareMathOperator*{\argmin}{\arg\!\min}
\DeclareMathOperator*{\argmax}{\arg\!\max}
\newcommand{\calA}{{\mathcal{A}}}
\newcommand{\calE}{{\mathcal{E}}}
\newcommand{\calV}{{\mathcal{V}}}
\newcommand{\field}[1]{\mathbb{#1}}
\newcommand{\E}{\field{E}}
\renewcommand{\P}{\field{P}}
\newcommand{\KL}{{\text{\rm KL}}}
\newcommand{\order}{\ensuremath{\mathcal{O}}}
\newcommand{\otil}{\ensuremath{\widetilde{\mathcal{O}}}}
\DeclareFontFamily{OMX}{MnSymbolE}{}
\DeclareFontShape{OMX}{MnSymbolE}{m}{n}{
    <-6>  MnSymbolE5
   <6-7>  MnSymbolE6
   <7-8>  MnSymbolE7
   <8-9>  MnSymbolE8
   <9-10> MnSymbolE9
  <10-12> MnSymbolE10
  <12->   MnSymbolE12}{}
\DeclareSymbolFont{mnlargesymbols}{OMX}{MnSymbolE}{m}{n}
\DeclareMathDelimiter{\llangle}{\mathopen}{mnlargesymbols}{'164}{mnlargesymbols}{'164}
\DeclareMathDelimiter{\rrangle}{\mathclose}{mnlargesymbols}{'171}{mnlargesymbols}{'171}
\newcommand{\pref}[1]{\prettyref{#1}}
\newcommand{\savehyperref}[2]{\texorpdfstring{\hyperref[#1]{#2}}{#2}}
\newcommand{\mc}[1]{\mathcal{#1}}
\title{Improved Regret Bounds for Linear \\ Bandits with Heavy-Tailed Rewards}
\date{\today}
\author{%
  Artin Tajdini \\
  University of Washington \\
  \texttt{artin@cs.washington.edu} \\
  \and
  Jonathan Scarlett \\
   National University of Singapore \\
  \texttt{scarlett@comp.nus.edu.sg} \\
  \and
  Kevin Jamieson \\
  University of Washington \\
  \texttt{jamieson@cs.washington.edu} \\
}
\begin{document}
\date{}
\maketitle

\begin{abstract}
We study stochastic linear bandits with heavy-tailed rewards, where the rewards have a finite $(1+\epsilon)$-absolute central moment bounded by $\upsilon$ for some $\epsilon \in (0,1]$. We improve both upper and lower bounds on the minimax regret compared to prior work. When $\upsilon = \mathcal{O}(1)$, the best prior known regret upper bound is $\tilde{\order}(d T^{\frac{1}{1+\epsilon}})$.  While a lower with the same scaling has been given, it relies on a construction using $\upsilon = \order(d)$, and adapting the construction to the bounded-moment regime with $\upsilon = \mathcal{O}(1)$ yields only a $\Omega(d^{\frac{\epsilon}{1+\epsilon}} T^{\frac{1}{1+\epsilon}})$ lower bound. This matches the known rate for multi-armed bandits and is generally loose for linear bandits, in particular being $\sqrt{d}$ below the optimal rate in the finite-variance case ($\epsilon = 1$).
We propose a new elimination-based algorithm guided by experimental design, which achieves regret $\tilde{\mathcal{O}}(d^{\frac{1+3\epsilon}{2(1+\epsilon)}} T^{\frac{1}{1+\epsilon}})$, thus improving the dependence on $d$ for all $\epsilon \in (0,1)$ and recovering a known optimal result for $\epsilon = 1$.  We also establish a lower bound of $\Omega(d^{\frac{2\epsilon}{1+\epsilon}} T^{\frac{1}{1+\epsilon}})$, which strictly improves upon the multi-armed bandit rate and highlights the hardness of heavy-tailed linear bandit problems. For finite action sets of size $n$, we derive upper and lower bounds of 
$\tilde{\mathcal{O}}(\sqrt d (\log n)^{\frac{\epsilon}{1+\epsilon}}T^{\frac{1}{1+\epsilon}})$ and
$\tilde\Omega(d^{\frac{\epsilon}{1+\epsilon}}(\log n)^{\frac{\epsilon}{1+\epsilon}} T^{\frac{1}{1+\epsilon}})$, respectively. 
Finally, we provide action-set-dependent regret upper bounds, showing that (i) for some geometries, such as $l_p$-norm balls for $p \le 1 + \epsilon$, we can further reduce the dependence on $d$, and (ii) for RKHS functions with the Mat\'ern kernel we can attain sublinear regret for all $\epsilon \in (0,1]$, thus substantially improving over the existing state-of-the-art.
\end{abstract}

\vspace*{-2ex}
\section{Introduction} \label{sec:intro}
\vspace*{-1ex}

The stochastic linear bandit problem is a foundational setting of sequential decision-making under uncertainty, where the expected reward of each action is modeled as a linear function of known features. While most existing work assumes sub-Gaussian reward noise—enabling the use of concentration inequalities like Chernoff bounds—real-world noise often exhibits heavy tails, potentially with unbounded variance, violating these assumptions.
Heavy-tailed noise naturally arises in diverse domains such as high-volatility asset returns in finance \cite{ContBouchaud2000, Cont01022001}, conversion values in online advertising \cite{doi:10.1287/isre.2019.0902}, cortical neural oscillations \cite{Roberts2015}, and packet delays in communication networks \cite{10.1109/TIT.2011.2173713}. In such settings, reward distributions may be well-approximated by distributions such as Pareto, Student’s t, or Weibull, all of which exhibit only polynomial tail decay.

The statistical literature has developed several robust estimation techniques for random variables with only bounded $(1+\epsilon)$-moments (for some $\epsilon \in (0,1]$), such as median-of-means estimators \cite{devroye2015subgaussianmeanestimators, LugosiMendelson2019} and Catoni $M$-estimators \cite{Catoni2012, Brownlees_2015} in the univariate case, as well as robust least squares \cite{Audibert_2011, pmlr-v32-hsu14, han2019convergence} and adaptive Huber regression \cite{SunZhou2020} for multivariate settings.

Robustness to heavy tails was first introduced into sequential decision-making by \cite{Bub13a} in the context of multi-armed bandits. Subsequent work including \cite{MedinaYang2016,NEURIPS2018_173f0f6b, ijcai2020p406} extended these ideas to linear bandits, where each action is represented by a feature vector and the reward includes heavy-tailed noise. Generalizing robust estimators from the univariate to the multivariate setting is nontrivial, and many works have focused on designing such estimators and integrating them into familiar algorithmic frameworks like UCB.
However, the relative unfamiliarity of heavy-tailed noise can make it difficult to judge the tightness of the regret bounds. As we discuss later, this has led to some degree of misinterpretation of existing lower bounds, with key problems prematurely considered ``solved'' despite persistent, unrecognized gaps.

\subsection{Problem Statement}

We consider the problem of stochastic linear bandits with an
action set $\mathcal{A}\subseteq\mathbb{R}^d$ and an unknown parameter
$\theta^\star\!\in\!\mathbb{R}^d$.
At each round $t=1,2,\dots,T$,
the learner chooses an action $x_t\in\mathcal{A}$ and observes the reward
\[
    y_t \;=\; \langle x_t,\theta^\star\rangle \;+\; \eta_t,
\]
where $\eta_t$ are independent noise terms that satisfy $\mathbb{E}[\eta_t]=0$ and
$\mathbb{E}\bigl[|\eta_t|^{1+\epsilon}\bigr]\le\upsilon$ for some $\epsilon\in(0,1]$ and
finite $\upsilon>0$. 
We adopt the standard assumption that the expected rewards and parameters are bounded, namely, $\sup_{x \in \calA} |\langle x,\theta^\star\rangle| \le1$ and $\|\theta^\star\|_2 \le 1$.
Letting $x^\star \in \arg\max_{x\in\mathcal{A}}\langle x,\theta^\star\rangle$ be an optimal
action,
the cumulative expected regret after $T$ rounds is
\[
    R_T \;=\; \sum_{t=1}^{T} \big(
        \langle x^\star,\theta^\star\rangle
        - \langle x_t, \theta^\star \rangle \big).
        \]
Given $(\mathcal{A},\epsilon,\upsilon)$, the objective is to design a policy for sequentially selecting the points (i.e., $x_t$ for $t=1,\dotsc,T$) in order to minimize $R_T$.

\subsection{Contributions}

We study the minimax regret of stochastic linear bandits under heavy-tailed noise and make several contributions that clarify and advance the current state of the art. Although valid lower bounds exist, we show that they have been misinterpreted as matching known upper bounds. After correcting this misconception, we provide improved upper and lower bounds in the following ways:
\begin{itemize}[leftmargin=5ex]
    \item \textbf{Novel estimator and analysis:} We introduce a new estimator inspired by \cite{Cam21} (who studied the finite-variance setting, $\epsilon = 1$), adapted to the heavy-tailed setting ($\epsilon \in (0,1]$). Its analysis leads to an experimental design problem that accounts for the geometry induced by the heavy-tailed noise, which is potentially of independent interest beyond linear bandits.
    \item \textbf{Improved upper bounds:} We use this estimator within a phased elimination algorithm to obtain state-of-the-art regret bounds for both finite- and infinite-arm settings. Additionally, we derive a geometry-dependent regret bound that emerges naturally from the estimator’s experimental design.
    \item \textbf{Improved lower bounds:} We establish novel minimax lower bounds under heavy-tailed noise that are the first to reveal a dimension-dependent gap between multi-armed and linear bandit settings (e.g., when the arms lie on the unit sphere).  We provide such results for both the finite-arm and infinite-arm settings.
\end{itemize}
\pref{tab:regret_comparison} summarizes our quantitative improvements over prior work, while \pref{fig:side_by_side} illustrates the degree of improvement obtained and what gaps still remain.  

In addition to these results for heavy-tailed linear bandits, we show that our algorithm permits the kernel trick, and that this leads to regret bounds for the Mat\'ern kernel (with heavy-tailed noise) that significantly improve on the best existing bounds, in particular being sublinear for all $\epsilon \in (0,1]$.  See \pref{sec:special_cases} for a summary, and \pref{app:kernel} for the details.

\setlength{\tabcolsep}{1.5pt} 
\begin{table}[t]
\footnotesize
\centering
\caption{Comparison of regret bounds (in the $\widetilde{O}(\cdot)$ or $\widetilde{\Omega}(\cdot)$ sense) with heavy-tailed rewards for the model $y_t=\langle x_t, \theta_* \rangle + \eta_t$ where $\E[\eta_t]=0$, $\E[|\eta_t|^{1+\epsilon}] \leq 1$, $\|\theta\|_2 \leq 1$, $|\langle x, \theta \rangle| \leq 1$.  The complexity measure $M(\calA)$ is defined in \pref{thm: main-upper}.} 
\label{tab:regret_comparison}
\renewcommand{\arraystretch}{1.5}
\begin{tabular}{cccc}
\hline
\textbf{Paper} & \textbf{Setting}  & \textbf{Regret Upper Bound} & \textbf{Regret Lower Bound} \\
\hline
\cite{NEURIPS2018_173f0f6b} & general & $dT^{\frac{1}{1 + \epsilon}}$ & \multirow{2}{*}{$d^{\frac{\epsilon}{1 + \epsilon}}T^{\frac{1}{1 + \epsilon}}$\, \footnotemark} \\
\cite{huang2023tackling} & $\E[|\eta_t|^{1+\epsilon}] \leq \upsilon^{1 + \epsilon}_t$ & $d \sqrt{\sum_{t=1}^T \upsilon_t^2} T^{\frac{1 - \epsilon}{2 + 2\epsilon}}$ & \\
\hline
\cite{ijcai2020p406} & $|\calA| = n$ & $\sqrt{d \log n}T^{\frac{1}{1 + \epsilon}}$ & $d^{\frac{\epsilon}{1 + \epsilon}}T^{\frac{1}{1 + \epsilon}}$ \\
\hline
\cite{Cho19} & $\text{Mat\'ern}(\nu, d)$ & $T^{\frac{2+\epsilon}{2(1 + \epsilon)}+\frac{d}{2\nu + d}}$ & $T^{\frac{\nu + d \epsilon}{\nu(1 + \epsilon) + d \epsilon}}$  \\
\hline 
\cite{Bub13a} & MAB($\calA = \Delta^d$) & $ d^{\frac{\epsilon}{1 + \epsilon}} T^{\frac{1}{1 + \epsilon}}$& $d^{\frac{\epsilon}{1 + \epsilon}}T^{\frac{1}{1 + \epsilon}}$  \\
\hline
\multirow{5}{*}{Our Work} & $\calA$-dependent & $M(\mathcal{A})^{\frac{1}{1 + \epsilon}} \min(d, \log |\calA|)^{\frac{\epsilon}{1 + \epsilon}}T^{\frac{1}{1 + \epsilon}}$\textsubscript{(\pref{thm: main-upper})} & \\
 & general & $d^{\frac{1 + 3 \epsilon}{2(1 + \epsilon)}}T^{\frac{1}{1 + \epsilon}}$\textsubscript{(\pref{cor: gen-upper-bound})} & $ d^{\frac{2\epsilon}{1 + \epsilon}} T^{\frac{1}{1 + \epsilon}}$\textsubscript{(\pref{thm: lower-hypercube})} \\ 
& $|\calA| = n$ & $\sqrt{d} (\log n)^{\frac{\epsilon}{1 + \epsilon}}T^{\frac{1}{1 + \epsilon}}$\textsubscript{(\pref{cor: gen-upper-bound})} & $ d^{\frac{\epsilon}{1 + \epsilon}} (\log n)^{\frac{\epsilon}{1 + \epsilon}} T^{\frac{1}{1 + \epsilon}}$\textsubscript{(\pref{thm: lower-finite})} \\ 
& $\text{Mat\'ern}(\nu, d)$ & $T^{1 - \frac{\epsilon}{1 + \epsilon} \frac{2 \nu}{2 \nu + d}}$\textsubscript{(\pref{cor:Matern})} &  \\ 
& MAB($\calA = \Delta^d$) & $ d^{\frac{\epsilon}{1 + \epsilon}} T^{\frac{1}{1 + \epsilon}}$\textsubscript{(\pref{cor: mab-upper-bound})}&
\\
\hline 
\end{tabular}
\vspace{2pt}
\end{table}

\subsection{Related Work}

The first systematic study of heavy-tailed noise in bandits is due to
\cite{Bub13a},
who replaced the empirical mean in UCB with robust mean estimators, and obtained a regret bound of $\widetilde{O}\big(n^{\frac{\epsilon}{1 + \epsilon}}T^{1/(1+\epsilon)}\big)$ with $n$ arms, along with a matching lower bound. 
A sequence of follow-up works
\cite{YuNevmyvaka2018, pmlr-v97-lu19c, NEURIPS2020_607bc9eb, 9247972, DBLP:journals/corr/abs-2201-11921, chen2025uniinf}
refined these ideas and extended them to best-arm identification, adversarial, parameter-free, and Lipschitz settings.
The first extension of heavy-tailed analysis from MAB to linear bandits is due
to \cite{MedinaYang2016},
who proposed truncation- and MoM-based algorithms and proved an
$\widetilde{O}\!\bigl(d\,T^{\frac{2+\epsilon}{2(1+\epsilon)}}\bigr)$ regret bound.
Subsequently, 
\cite{NEURIPS2018_173f0f6b, ijcai2020p406} improved the regret bounds for
infinite and finite action sets, respectively (see \pref{tab:regret_comparison}). 
Huber-loss based estimators have emerged as another robustification strategy, for which
\cite{LiSun2023, pmlr-v216-kang23a, huang2023tackling, WangZhangZhaoZhou2025} provided moment-aware regret bounds. \cite{zhong2021breaking} suggested median based estimators for symmetric error distributions without any bounded moments (e.g., Cauchy).
Beyond linear bandits, \cite{XueWangWanYiZhang2023} proved a similar $dT^{\frac{1}{1 + \epsilon}}$ bound for generalized linear bandits, and \cite{Cho19} studied heavy-tailed kernel-based bandits, which we will cover in more detail in \pref{app:kernel}.  
 A summary of the best regret bounds of previous work and ours can be found in \pref{tab:regret_comparison}.
 
\begin{figure}[!tbp]
  \centering
  \begin{subfigure}[b]{0.48\textwidth}
    \centering
    \includegraphics[width=\linewidth]{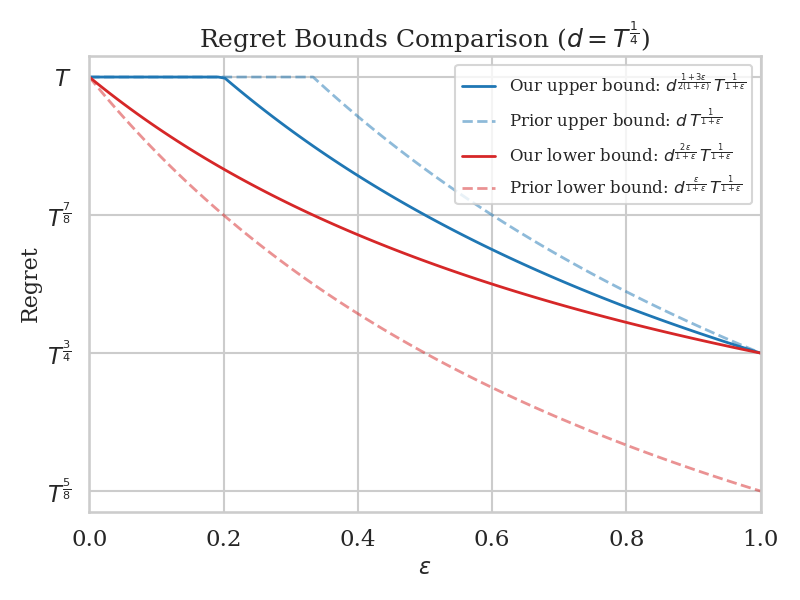}
    \caption{Regret bounds comparison}
    \label{fig:regret_comparison}
  \end{subfigure}
  \hfill
  \begin{subfigure}[b]{0.48\textwidth}
    \centering
    \includegraphics[width=\linewidth]{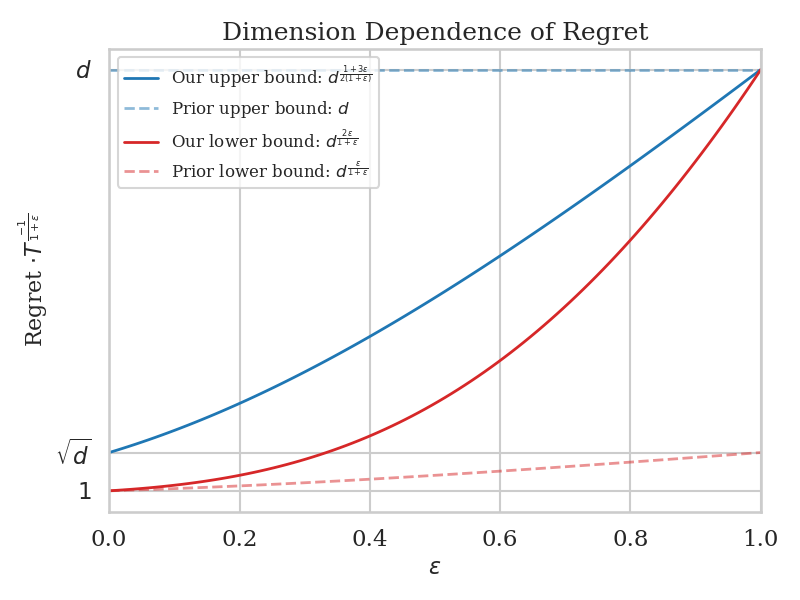}
    \caption{Dimension‐dependence comparison}
    \label{fig:dimension_dependence}
  \end{subfigure}
  \caption{
    (\subref{fig:regret_comparison}) Comparison of regret bounds across $\epsilon$ for $T = d^4$. 
    (\subref{fig:dimension_dependence}) Scaling of the bounds in $d$.
  }
  \label{fig:side_by_side}
\end{figure}

\footnotetext{We refer to this as the multi-armed bandit (MAB) rate because it matches that of a MAB problem with $d$ arms.  Note that that the $dT^{\frac{1}{1+\epsilon}}$ lower bound from \cite{NEURIPS2018_173f0f6b} was only proved for an instance with $\E[|\eta_t|^{1+\epsilon}] = O(d)$ rather than $O(1)$; see \pref{sec:lower} for further discussion.}

\vspace*{-2ex}
\section{Lower Bounds} \label{sec:lower}
\vspace*{-1ex}

Before describing our own lower bounds, we take a moment to clarify the state of lower bounds that exist in the literature, as there has been some apparent misinterpretation within the community. 
The regret lower bound construction presented in \citep{NEURIPS2018_173f0f6b} leverages the reward distribution 
    \begin{align*}
        y(x) = \begin{cases}
            (\frac{1}{\Delta})^{\frac{1}{\epsilon}} & \text{w.p.~\,}\Delta^{\frac{1}{\epsilon}} \theta^\top x \\
            0 & \text{w.p.~\,}1 - \Delta^{\frac{1}{\epsilon}} \theta^\top x
        \end{cases}
    \end{align*}
under the choice $\Delta = \frac{1}{12} T^{-\frac{\epsilon}{1 + \epsilon}}$, and with choices of $\theta$ and $\calA$ that ensure $d\Delta \le \theta^\top x \le 2d \Delta$. 
A straightforward calculation shows that the reward distributions of this construction possesses a $(1 + \epsilon)$-absolute moment of $\Delta^{-1} (\theta^\top x) \ge d$ for all actions.
Recall that in our problem statement we consider the $(1+\epsilon)$-absolute moment to be a constant (that does not depend on the the dimension $d$ or time horizon $T$). 
We can compare this with the canonical case of sub-Gaussian noise ($\epsilon=1$) where it is assumed that the second moment is bounded by $\sigma^2=\Omega(1)$, in which it is well-known that the optimal regret rate is on the order of $\sigma d \sqrt{T}$ \cite{Csa18}.
If we were to set $\sigma^2 = \Theta(d)$, this would suggest a rate of $d^{3/2} \sqrt{T}$, but this only exceeds the usual $d\sqrt{T}$ because $\sigma$ is artificially large. 
We stress that we are not claiming that the lower bound of \citep{NEURIPS2018_173f0f6b} is in any way incorrect, and the authors even acknowledge that the bound on the moment scales with the dimension in the appendix of their work. 
We are simply pointing out that there has been some misinterpretation of the lower bound within the community.\footnote{Previous works that indicate the minimax optimality of this bound (with respect to $T$ and $d$) include \cite{ijcai2020p406, XueWangWanYiZhang2023, huang2023tackling, WangZhangZhaoZhou2025}.}

If we adjust the expected reward distributions such that $\Delta \le \theta^\top x \le 2\Delta$, so that the reward distribution maintains a constant $1 + \epsilon$ absolute moment, the resulting regret lower bound turns out to scale as $d^{\frac{\epsilon}{1+\epsilon}} T^{\frac{1}{1+\epsilon}}$,\footnote{This is obtained by optimizing $\Delta$ for the adjusted regret $\Delta 
T (\frac{1}{4} - \frac{3}{2}  \sqrt{d^{-1}\Delta^{\frac{1 + \epsilon}{\epsilon}} T})$}
matching the known optimal lower bound for the Multi-Armed Bandit (MAB) setting with $d$ arms. However, with a more precise analysis, we can prove a stronger lower bound on a similar instance (with modified parameters) having a constant $(1 + \epsilon)$-central moment of rewards, as we will see below.

\subsection{Infinite Arm Set}

Given the context above, we are ready to present our own lower bound that builds on the construction introduced by \citep{NEURIPS2018_173f0f6b} but is specifically tailored to improving the $d$ dependence.

\begin{theorem}
\label{thm: lower-hypercube}
    Fix the action set $\mathcal{A} = \{x \in [0, 1]^{2d} \,:\, x_{2i - 1} + x_{2i} = 1 \quad\forall i\in [d] \} $.  There exists a reward distribution with a $(1 + \epsilon)$-central moment bounded by $1$ and a $\theta^* \in \mathbb{R}^{2d}$ with $\|\theta^*\|_2 \le 1$ and $\sup_{x \in \calA} |x^\top \theta^*| \le 1$, such that for $T \ge 4^{\frac{1 + \epsilon}{\epsilon}}d^2$, the regret incurred is $\Omega( d^{ \frac{2\epsilon}{1 + \epsilon}} T^{\frac{1}{1 + \epsilon}})$.
\end{theorem}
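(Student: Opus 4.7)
The plan is to adapt Assouad's multi-hypothesis technique to the product structure of $\calA$ using a Bubeck-style heavy-tailed reward distribution (as in \cite{NEURIPS2018_173f0f6b}), but with gap and scale parameters that exploit the linear structure rather than treating the problem as $d$ parallel MABs. Index a family of $2^d$ instances by $\sigma \in \{-1,+1\}^d$ and set $\theta^\sigma_{2i-1} = a + \sigma_i b$, $\theta^\sigma_{2i} = a - \sigma_i b$, with $a = 2b$ and $b > 0$ to be optimized. For any $x \in \calA$ the mean reward is $\mu^\sigma(x) = da + b\sum_i \sigma_i(x_{2i-1}-x_{2i}) \in [db, 3db]$, and the optimal action in instance $\sigma$ sets $x_{2i-1}=1$ iff $\sigma_i = +1$. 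I draw rewards from the Bernoulli-type distribution $y = M$ with probability $\mu^\sigma(x)/M$ and $y = 0$ otherwise, with $M = (3db)^{-1/\epsilon}$; this makes $\E[|y - \mu^\sigma(x)|^{1+\epsilon}] \asymp M^\epsilon \mu^\sigma(x) \le 1$, while $b \le 1/(3d)$ guarantees $\|\theta^\sigma\|_2 \le 1$ and $\sup_x |\langle x, \theta^\sigma\rangle| \le 1$.

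Next I compute the KL divergence. For two instances $\sigma, \sigma^{(i)}$ differing only in coordinate $i$, the standard Bernoulli $\chi^2$ bound gives
\[
    \KL\bigl(P_x^\sigma \,\|\, P_x^{\sigma^{(i)}}\bigr) \,\lesssim\, \frac{(\mu^\sigma(x)-\mu^{\sigma^{(i)}}(x))^2}{M\,\mu^\sigma(x)} \,\asymp\, b^{(1+\epsilon)/\epsilon}\, d^{(1-\epsilon)/\epsilon}\,(x_{2i-1} - x_{2i})^2.
\]
Summing over $t = 1, \dots, T$ and $i = 1, \dots, d$, and using $\sum_i (x_{2i-1} - x_{2i})^2 \le d$ for every $x \in \calA$, the coordinate-averaged KL is $\overline{\KL} \lesssim b^{(1+\epsilon)/\epsilon}\, d^{(1-\epsilon)/\epsilon}\, T$. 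Because the regret decomposes coordinate-wise as $R_T(\sigma) = b\sum_{i, t}(1 - \sigma_i(x_{t,2i-1}-x_{t,2i}))$, Assouad's lemma (combined with Pinsker's inequality and Jensen) produces $\max_\sigma \E_\sigma[R_T] \gtrsim bdT\,(1 - \sqrt{\overline{\KL}/2})$. Choosing $b \asymp d^{-(1-\epsilon)/(1+\epsilon)} T^{-\epsilon/(1+\epsilon)}$ makes $\overline{\KL}$ a small constant and yields $\Omega(bdT) = \Omega(d^{2\epsilon/(1+\epsilon)} T^{1/(1+\epsilon)})$; the admissibility constraint $b \le 1/(3d)$ translates, up to constants, into precisely the stated condition $T \ge 4^{(1+\epsilon)/\epsilon} d^2$.

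The main obstacle---and the source of the improvement over prior work---is calibrating the reward distribution so that the $(1+\epsilon)$-moment constraint stays saturated while the per-coordinate KL shrinks by an extra $d^{(1-\epsilon)/\epsilon}$ factor compared to a naive MAB analysis. If one forces each mean $\mu^\sigma(x)$ to be a single small gap $\Delta$ (as in the straightforward rescaling of \cite{NEURIPS2018_173f0f6b}), one only recovers the MAB rate $d^{\epsilon/(1+\epsilon)} T^{1/(1+\epsilon)}$. The gain here comes from letting $\mu^\sigma(x)$ grow as $\Theta(bd)$, so that a single observation simultaneously informs all $d$ sub-problems through its aggregate mean while carrying per-coordinate signal of only $\Theta(b)$; the heavy-tailed scaling $M = (3db)^{-1/\epsilon}$ forced by the moment constraint then dilutes each coordinate's KL by exactly the right amount to beat the MAB barrier by a factor of $d^{\epsilon/(1+\epsilon)}$.
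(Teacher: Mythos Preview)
Your proposal is correct and follows essentially the same approach as the paper. The construction (scaled-Bernoulli heavy-tailed rewards with $M \asymp (d\Delta)^{-1/\epsilon}$, parameter vectors indexed by a hypercube so that $\mu(x) = \Theta(d\Delta)$ while per-coordinate gaps are $\Theta(\Delta)$, and the Assouad-style averaging over the $2^d$ instances with per-coordinate Bernoulli KL bounds) is identical to the paper's; the only cosmetic differences are that the paper uses entries $\{\Delta,2\Delta\}$ rather than your $\{b,3b\}$ and invokes Bretagnolle--Huber in place of Pinsker.
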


\begin{proof}
    For a parameter $\Delta \le \frac{1}{4d}$ to be specified later, we let the reward distribution be a Bernoulli random variable defined as follows:
    \begin{align*}
        y(x) = \begin{cases}
            (\frac{1}{\gamma})^{\frac{1}{\epsilon}} & \text{w.p.~\,}\gamma^{\frac{1}{\epsilon}} \theta^\top x \\
            0 & \text{w.p.~\,}1 - \gamma^{\frac{1}{\epsilon}} \theta^\top x
        \end{cases}
    \end{align*}
    with $\gamma := 2d \Delta$.  We consider parameter vectors $\theta$ lying in the set $\Theta := \left\{ \theta \in \{\Delta, 2\Delta\}^{2d} : \theta_{2i - 1} + \theta_{2i} = 3\Delta  \right\}$, from which the assumption $\Delta \le \frac{1}{4d}$ readily implies $\|\theta\|_2 \le 1$ and $\sup_{x \in \calA} |x^\top \theta^*| \le 1$.  For any $\theta \in \Theta$, the $(1 + \epsilon)$-raw moment of the reward distribution (and therefore the central moment, since the rewards are nonnegative) for each action is bounded by $\E[|y(x)|^{1 + \epsilon} | x] = \gamma^{-(1+\epsilon)/\epsilon} \gamma^{1/\epsilon} \theta^\top x  = \gamma^{-1} \theta^\top x \le 1$, 
    since $\gamma = 2d\Delta$ and $\theta^\top x \le 2d\Delta$.  
     
     Let $R_T(\calA,\theta)$ be the cumulative regret for arm set $\calA$ and parameter $\theta$, and let $\text{ind}_i(\theta) := \arg\max_{b \in \{0, 1\}} (\theta_{2i - 1 + b})$ for $\theta \in \Theta$, and write $x_t = (x_{t,1},\dotsc,x_{t,d})$.  We have
    \begin{align*}
        R_T(\calA,\theta) &= \sum_{t = 1}^T \sum_{i = 1}^d \big( \Delta - \Delta x_{t, 2i - 1 + \text{ind}_i(\theta)} \big)
        = \Delta \sum_{t = 1}^T \sum_{i = 1}^d \Big( \frac{1}{2} - \frac{1}{2} (-1)^{\text{ind}_i(\theta)} (x_{t, 2i - 1} - x_{t, 2i}) \Big)
        \\ &\ge
            \frac{\Delta}{2} \sum_{i = 1}^d \E_{\theta}\bigg[ \sum_{t = 1}^T \mathbb{I}\{ (-1)^{\text{ind}_i(\theta)} (x_{t, 2i - 1} - x_{t, 2i}) \le 0 \} \bigg]
        \\ &\ge
        \frac{\Delta T}{4} \sum_{i = 1}^d \P_{\theta}\left[ \sum_{t = 1}^T \mathbb{I}\{ (-1)^{\text{ind}_i(\theta)} (x_{t, 2i - 1} - x_{t, 2i}) \le 0 \} \ge \frac{T}{2} \right],
    \end{align*}
    where the second equality follows by using $x_{t, 2i - 1} + x_{t, 2i} = 1$ and checking the cases $\text{ind}_i(\theta) = 0$ and $\text{ind}_i(\theta) = 1$ separately.

    For any $\theta \in \Theta, i \in [d]$, we define $\theta' \in \Theta$ with entries $\theta'_{j} = \begin{cases}
        3 \Delta - \theta_j & 2i - 1 \le j \le 2i \\ \theta_j &\text{otherwise}
    \end{cases}$, and let $p_{\theta, i} := \P_{\theta}\left[ \sum_{t = 1}^T \mathbb{I}\{ (-1)^{\text{ind}_i(\theta)} (x_{t, 2i - 1} - x_{t, 2i}) \le 0 \} \ge \frac{T}{2} \right]$.  We then have the following:
\begin{align*}
     p_{\theta, i} + p_{\theta', i}
    & \ge \frac{1}{2} \exp(- \KL(\P_\theta \| \P_{\theta'}) ) \tag{Bretagnolle–Huber inequality}
    \\ &= \frac{1}{2} \exp \left( - \E_\theta \left[\sum_{t = 1}^T \KL\left( \text{Ber}(\gamma^{\frac{1}{\epsilon}} \theta^\top x_t) \| \text{Ber}(\gamma^{\frac{1}{\epsilon}} \theta'^\top x_t)\right) \right]  \right). \tag{Chain rule}
\end{align*}
Now we set $\Delta := \frac{1}{2} d^{\frac{\epsilon - 1}{1 + \epsilon}} T^{\frac{-\epsilon}{1 + \epsilon}}$. Note that since $T \ge 4^{\frac{1 + \epsilon}{\epsilon}}d^2$, the above-mentioned condition $\Delta \le \frac{1}{4d}$ holds, ensuring the Bernoulli parameter is in $[0,1]$. Under this choice of $\Delta$, we have 
    \begin{align*}
        \KL\left( \text{Ber}(\gamma^{\frac{1}{\epsilon}} \theta^\top x_t) \| \text{Ber}(\gamma^{\frac{1}{\epsilon}} \theta'^\top x_t)\right) \le \frac{2^{\frac{2}{\epsilon}} 4\Delta^{\frac{2}{\epsilon}} d^{\frac{2}{\epsilon}} \Delta^2}{ 2^{\frac{1}{\epsilon}} \Delta^{\frac{1 + \epsilon}{\epsilon}} d^{\frac{1 + \epsilon}{\epsilon}} \cdot \frac{1}{2}  } = 2^{\frac{1}{\epsilon}} 8 \Delta^{\frac{1 + \epsilon}{\epsilon}} d^{\frac{1 - \epsilon}{\epsilon}} = 4 T^{-1},
    \end{align*}
    where in the first inequality we used $\KL(\text{Ber}(p) \| \text{Ber}(q)) \le \frac{(p - q)^2}{q(1 - q)}$; we get $|p-q| \le 2\gamma^{\frac{1}{\epsilon}}\Delta = 2(2d\Delta)^{\frac{1}{\epsilon}}\Delta$ because $\theta$ and $\theta'$ differ only via a single swap of $(\Delta,2\Delta)$ by $(2\Delta,\Delta)$, $q \ge \gamma^{\frac{1}{\epsilon}} \Delta d = (2d\Delta)^{\frac{1}{\epsilon}} \Delta d$ by construction, and $1-q \ge 1 - \gamma^{\frac{1}{\epsilon}}2d\Delta \ge \frac{1}{2}$ via $\Delta \le \frac{1}{4d}$.  
    
    Combining the preceding display equations gives $p_{\theta, i} + p_{\theta',i} \ge \frac{1}{2} \exp(-4)$, and averaging over  all $(\theta,\theta')$ (with $\theta' \ne \theta$) and summing over $i$, we obtain 
    $
        \frac{1}{|\Theta|} \sum_{\theta \in \Theta} \sum_{i = 1}^d p_{\theta, i} \ge \frac{1}{4} d \exp(-4).
    $
    Hence, there exists $\theta^* \in \Theta$ such that $\sum_{i=1}^d p_{\theta^*,i} \ge \frac{1}{4} d \exp(-4)$, and substituting into our earlier lower bound on $R_T$ gives $R_T(\calA, \theta^*) \ge \frac{1}{16} \exp(-4) \Delta d T = \frac{1}{32} \exp(-4) d^{\frac{2 \epsilon}{1 + \epsilon} } T^{\frac{1}{1 + \epsilon}}$.
\end{proof}
The setting in \pref{thm: lower-hypercube} is not the only one that gives regret $\Omega( d^{ \frac{2\epsilon}{1 + \epsilon}} T^{\frac{1}{1 + \epsilon}})$.  In fact, the same lower bound turns out to hold for the unit ball action set with a slight change in reward distribution to avoid large KL divergences when $\theta^\top x$ is small. The details are given in \pref{app: lower-unit-ball}.

\subsection{Finite Arm Set}

The best known lower bound for finite arm sets matches the MAB lower bound of $d^{\frac{\epsilon}{1 + \epsilon}} T^{\frac{1}{1 + \epsilon}}$ with $d$ arms (see \cite{ijcai2020p406} and the summary in \pref{tab:regret_comparison}).  We provide the first $n$-dependent lower bound (where $n := |\calA|$) by combining ideas from the MAB lower bound construction for $m$ arms with the construction used in \pref{thm: lower-hypercube} for dimension $\frac{d}{m}$, where $m^{\frac{d}{m}} \approx n$. When $n = 2^{\order(d)}$ or $n = T^{\order(d)}$, which arises naturally when finely quantizing in each dimension, our lower bound matches the infinite arm case (in the $\widetilde{\Omega}(\cdot)$ sense) as one might expect.

\begin{theorem}
    \label{thm: lower-finite}
    For each $n \in [d, 2^{\lfloor\frac{d}{4}\rfloor}]$, there exists an action set $\calA$ with $|\calA| \le n$, a reward distribution with a $(1 + \epsilon)$-central moment bounded by $1$, and a $\theta^* \in \bbR^d$ with $\|\theta^*\|_2 \le 1$ and $\sup_{x \in \calA} |x^\top \theta^*| \le 1$, such that for $T \ge 4^{\frac{1 + \epsilon}{\epsilon}}d^{\frac{1 + \epsilon}{\epsilon}}$, the regret incurred is $\Omega\big(T^{\frac{1}{1 + \epsilon}} d^{\frac{\epsilon}{1 + \epsilon}} \big(\frac{\log n}{\log d}\big)^{\frac{\epsilon}{1 + \epsilon}}\big)$.
\end{theorem}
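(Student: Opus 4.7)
The plan is to combine the per-block hypercube structure of \pref{thm: lower-hypercube} with a needle-in-haystack MAB-style argument within each block. I would partition the $d$ coordinates into $k$ disjoint blocks of size $m$, with $m$ and $k$ chosen so that $m^k \le n$ (forcing $|\calA| \le n$) and $mk \le d$; balancing to hit the target exponents suggests $k \asymp \lceil \log n / \log d \rceil$ and $m \asymp \lfloor d/k \rfloor$, which specializes to a single $d$-armed MAB instance when $n = d$ and to the binary hypercube of \pref{thm: lower-hypercube} when $\log n = \Theta(d)$.

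First I would fix the action set $\calA = \{x \in \{0,1\}^d : \sum_{l \in \text{block}_j} x_l = 1 \text{ for all } j \in [k]\}$ (so $|\calA| = m^k \le n$), and the parameter class $\Theta = \{\theta \in \{\Delta,2\Delta\}^d : \text{exactly one coordinate per block equals } 2\Delta\}$, for a gap $\Delta$ to be chosen. I would reuse the scaled Bernoulli reward model from \pref{thm: lower-hypercube} with scale $\gamma = 2k\Delta$, which keeps the $(1+\epsilon)$-central moment bounded by $1$ and, provided $k\Delta \le 1/2$, enforces $\|\theta\|_2 \le 1$ and $\sup_{x \in \calA}|x^\top \theta| \le 1$. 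The same regret-to-probability reduction as in \pref{thm: lower-hypercube} then gives $R_T(\calA,\theta) \ge \tfrac{\Delta T}{4} \sum_{j=1}^k p_{\theta,j}$, where $p_{\theta,j}$ is the probability that block $j$'s special coordinate is pulled at most $T/2$ times. For each block $j$ and alternative $l \ne l^*_j(\theta)$, let $\theta^{(j,l)}$ denote $\theta$ with the special coordinate of block $j$ swapped from $l^*_j$ to $l$. A KL computation essentially identical to the one in the proof of \pref{thm: lower-hypercube} gives per-round KL at most a constant times $\gamma^{1/\epsilon}\Delta/k$ on rounds that pull $l^*_j$ or $l$ in block $j$ and $0$ otherwise, so $\KL(\P_\theta \| \P_{\theta^{(j,l)}}) \lesssim (\gamma^{1/\epsilon}\Delta/k)\,(\E_\theta T_{j,l^*_j} + \E_\theta T_{j,l})$ where $T_{j,l}$ counts rounds pulling coordinate $l$ in block $j$.

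Next I would apply Bretagnolle--Huber to each pair $(\theta,\theta^{(j,l)})$, average the exponential over $l \in [m]\setminus\{l^*_j\}$ and push the average inside via Jensen, and then average over a uniform prior on $\Theta$, which by symmetry forces $\E[T_{j,l^*_j(\theta)}] \le T/m$. This extracts the additional $m^{\epsilon/(1+\epsilon)}$ factor beyond the pure hypercube calculation. Calibrating $\Delta$ so the resulting exponent is $O(1)$ gives $\Delta \asymp m^{\epsilon/(1+\epsilon)} k^{(\epsilon-1)/(1+\epsilon)} T^{-\epsilon/(1+\epsilon)}$, and the final averaging-over-$\Theta$ step exactly as in \pref{thm: lower-hypercube} produces some $\theta^*$ with $R_T(\calA,\theta^*) \gtrsim \Delta T k \asymp m^{\epsilon/(1+\epsilon)} k^{2\epsilon/(1+\epsilon)} T^{1/(1+\epsilon)}$. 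Substituting $k \asymp \log n/\log d$ and $m \asymp d\log d/\log n$ recovers the claimed rate $T^{1/(1+\epsilon)} d^{\epsilon/(1+\epsilon)}(\log n/\log d)^{\epsilon/(1+\epsilon)}$.

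The hardest step will be the within-block needle-in-haystack averaging: a naive Bretagnolle--Huber between $\theta$ and a single $\theta^{(j,l)}$ only delivers the binary-per-block exponent $k^{2\epsilon/(1+\epsilon)}$ and misses the extra $m^{\epsilon/(1+\epsilon)}$ factor. Recovering it requires both Jensen on $\exp(-\cdot)$ over the $m-1$ alternatives and a uniform prior on $\Theta$ so that the ``pulls of the correct coordinate'' term in the KL bound averages to $T/m$ by symmetry. Separately, I would verify that $T \ge 4^{(1+\epsilon)/\epsilon} d^{(1+\epsilon)/\epsilon}$ forces $\Delta \le 1/(4k)$, which is what keeps the Bernoulli parameter $q \in [0,1/2]$ so that the bound $\KL(\mathrm{Ber}(p)\|\mathrm{Ber}(q)) \le (p-q)^2/(q(1-q))$ stays usable, and handle integer rounding of $(m,k)$ uniformly across $n \in [d, 2^{\lfloor d/4\rfloor}]$.
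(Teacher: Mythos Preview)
Your construction—$k$ blocks of size $m$ with $k \asymp \log n/\log d$, the one-hot action set, the $\{\Delta,2\Delta\}$ parameter family, and the scaled Bernoulli reward—matches the paper's exactly (the paper writes $d/m$ for your $k$). The gap is in the information-theoretic step that is supposed to extract the extra $m^{\epsilon/(1+\epsilon)}$ factor.

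Your key claim, that under a uniform prior on $\Theta$ symmetry forces $\E_\theta[T_{j,l^*_j(\theta)}]$ to average to $T/m$, is false. A good algorithm adapts to observations and pulls the special coordinate nearly $T$ times under each $\theta$, so $\frac{1}{|\Theta|}\sum_\theta \E_\theta[T_{j,l^*_j(\theta)}]$ can be arbitrarily close to $T$; there is no symmetry to exploit because the index $l^*_j(\theta)$ and the measure $\E_\theta[\cdot]$ move together. Without this claim, your averaged KL after Jensen is dominated by the term $c\,\E_\theta[T_{j,l^*_j}]$ (which appears $m-1$ times in the sum over $l$) rather than by $cT/m$, and you lose precisely the factor $m^{\epsilon/(1+\epsilon)}$ you identified as critical—landing back at the binary-per-block rate $k^{2\epsilon/(1+\epsilon)}T^{1/(1+\epsilon)}$ you explicitly wanted to beat.

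The paper's fix is to introduce, for each block $i$, a base parameter $\theta^{(0)} \notin \Theta$ with \emph{all} entries equal to $\Delta$ in that block, and to compare each $\theta^{(b)}$ to $\theta^{(0)}$ rather than to another element of $\Theta$. Since $\theta^{(0)}$ and $\theta^{(b)}$ differ only at coordinate $b$, the per-round KL picks up the indicator $\mathbb{I}\{x_{t,d_i+b}=1\}$ only, and summing over $t$ yields $\E_{\theta^{(0)}}[T_{i,b}]$ under a \emph{single fixed} measure. Now $\sum_b \E_{\theta^{(0)}}[T_{i,b}]=T$ is simply the deterministic identity $\sum_b T_{i,b}=T$, and averaging over $b$ (via Jensen on the square root, after Pinsker rather than Bretagnolle--Huber) produces the needed $T/m$. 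This ``hub'' device is the missing idea; once it is in place, your calibration of $\Delta$ and the final averaging over $\Theta$ go through essentially as you describe.
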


\begin{sketch}
We outline the main proof steps here, and defer the full details to \pref{app: finite-lower}.

Consider $\log(\cdot)$ with base 2, and define $m$ to be the smallest integer such that $\frac{m}{\log m} \ge \frac{d}{\log n}$.  From the assumption $n \in [d, 2^{\lfloor\frac{d}{4}\rfloor}]$ we can readily verify that $d > 4$ and $m \in [4,d]$. For convenience, we assume that $d$ is a multiple of $m$, since otherwise we can form the construction of the lower bound with $d' = d - (d \text{ mod } m)$ and pad the action vectors with zeros. Letting $d_i := (i - 1) m$, we define the action set and the parameter set as follows for some $\Delta \le \frac{1}{4d}$ to be specified later:
\begin{gather*}
    \mathcal{A} := \bigg\{a \in \{0, 1\}^{d}: \sum_{j = d_i + 1}^{d_{i + 1}} a_{j} = 1, ~~\forall i \in [d/m] \bigg\} \\
    \theta^* \in \Theta := \left\{ \theta \in \{\Delta, 2\Delta\}^{d} : \sum_{j = d_i + 1}^{d_{i + 1}} \theta_j = (m + 1)\Delta, ~~\forall i \in [d/m]  \right\}.
\end{gather*}    
In simple terms, the $d$-dimensional vectors are arranged in $d/m$ groups of size $m$; each block in $a \in \calA$ has a single entry of 1 (with 0 elsewhere), and each block in $\theta^*$ has a single entry of $2\Delta$ (with $\Delta$ elsewhere).  The condition $\Delta \le \frac{1}{4d}$ readily implies $\|\theta^*\|_2 \le 1$ and $x^\top \theta^* \le 1$ as required.  
Moreover, we have $|\calA| = m^{\frac{d}{m}}$, and thus $\log |\calA| = \frac{d}{m} \log m \le \log n$ by the definition of $m$.

Similar to \pref{thm: lower-hypercube}, we let the rewards distribution be
    $
        y(x) = \begin{cases}
            (\frac{1}{\gamma})^{\frac{1}{\epsilon}} & \text{w.p.~\,}\gamma^{\frac{1}{\epsilon}} \theta^\top x \\
            0 & \text{w.p.~\,}1 - \gamma^{\frac{1}{\epsilon}} \theta^\top x
        \end{cases} $
with $\gamma := 2 \frac{d}{m} \Delta$.  The choices of $\calA$ and $\Theta$ give $\theta^\top x \le 2\Delta \frac{d}{m}$, so by the same reasoning as in \pref{thm: lower-hypercube}, the $(1 + \epsilon)$-moment of the reward distribution is bounded by $1$.

Let $\text{ind}_i(x) := \arg\max_{b \in [m]} (x_{d_i + b})$ for any $x \in \calA \cup \Theta$, and define $T_{i,b} := |\{t : x_{t, d_i + b} = 1\}|$. Moreover, define $t_{\rm U}$ to be a random integer drawn uniformly from $[T]$, which immediately implies that $\P_\theta[x_{t_{\rm U}, d_i + b} = 1] = \frac{\E_\theta[T_{i, b}]}{T}$.  Then we can rewrite regret as
$
        R_T(\calA,\theta) =
        \Delta T \sum_{i = 1}^{d/m} \big( 1 - \P_\theta[x_{t_{\rm U}, d_i + \text{ind}_i(\theta)} = 1] \big).
$
    For any $\theta \in \Theta$ and $i \in [\frac{d}{m}]$, and any $b \in [m]$, we define $\theta^{(b)} \in \Theta$ to have entries given by $\theta^{(b)}_{j} = \begin{cases}
        \Delta + \Delta\mathbb{I}\{j = d_i + b\} & j \in [d_i + 1, d_{i + 1}] \\ \theta_j &\text{otherwise}
    \end{cases}$; and define the base parameter $\theta^{(0)}$ with entries $\theta^{(0)}_{j} = \begin{cases}
        \Delta & j \in [d_i + 1, d_{i + 1}] \\ \theta_j &\text{otherwise}
    \end{cases}$. Note that $\theta^{(\text{ind}_i(\theta))} = \theta$. Moreover, similar to \pref{thm: lower-hypercube}, the KL divergence of reward distribution of $\theta^{(0)}$ and $\theta^{(b)}$ problem instances at action $x_t$ can be bounded by $2^{\frac{1 + \epsilon}{\epsilon}} \Delta^{\frac{1 + \epsilon}{\epsilon}} \left(\frac{d}{m}\right)^{\frac{1 - \epsilon}{\epsilon}} \mathbb{I}\{x_{t,d_i + b} = 1\}$.
    We set $\Delta := \frac{1}{8}\left(\frac{d}{m}\right)^{\frac{\epsilon - 1}{1 + \epsilon}} \left(\frac{T}{m}\right)^{\frac{-\epsilon}{1 + \epsilon}}$, from which the condition $T \ge 4^{\frac{1 + \epsilon}{\epsilon}}d^{\frac{1 + \epsilon}{\epsilon}}$ readily yields $\gamma^{\frac{1}{\epsilon}} (\frac{2d}{m} \Delta) \le \frac{1}{2}$. 
    
    Next, using Pinsker's inequality along with averaging over $b \in m$, we can show that $\frac{1}{m} \sum_b\P_{\theta^{(b)}}[x_{t, d_i + b} = 1] \le \frac{1}{m} + \frac{1}{2}$.  
    Averaging over all $\theta \in \Theta$, summing over $i \in [d/m]$, and recalling that $m \ge 4$, we obtain
    \begin{align*}
        \frac{1}{|\Theta|} \sum_{\theta \in \Theta} \sum_{i = 1}^{d/m} \big( 1 - \P_\theta[x_{t, d_i + \text{ind}_i(\theta)} = 1] \big) \ge \frac{d}{m} \Big(1 - \frac{1}{m} - \frac{1}{2}\Big) \ge \frac{d}{4m}.
    \end{align*}
    Hence, there exists $\theta^* \in \Theta$ such that $ \sum_{i = 1}^{d/m} \big( 1 - \P_{\theta^*}[x_{t, d_i + \text{ind}_i(\theta^*)} = 1]\big) \ge \frac{d}{4m}$, substituting into our earlier lower bound on $R_T$ along with our choice of $\Delta$, we obtain
    \begin{align*}
        R_T(\calA, \theta^*) \ge \frac{d}{4m} \Delta T = \frac{1}{32} d^{\frac{\epsilon}{1 + \epsilon} } \left(\frac{d}{m}\right)^{\frac{\epsilon}{1 + \epsilon}} T^{\frac{1}{1 + \epsilon}}.
    \end{align*}
    Since $f(x) = \frac{x}{\log x}$ is increasing for $x \ge e$, and $m \in [4,d]$, the definition of $m$ gives the         $\frac{d}{\log n} > \frac{m - 1}{\log(m - 1)} > \frac{m - 1}{\log m} \ge \frac{m - 1}{\log d}$. Rearranging the terms and bounding $m$, we obtain $\frac{d}{m} \ge \frac{\log n}{2\log d}$, which gives the desired result.
\end{sketch}

\vspace*{-2ex}
\section{Proposed Algorithm and Upper Bounds} \label{sec:algo}
\vspace*{-1ex}
\begin{algorithm}[ht]
\caption{\underline{M}oment-based \underline{E}xperimental \underline{D}esign \underline{P}hased \underline{E}limination (\Alg)}
\label{alg:MEDE}
\KwIn{$\mathcal{A}$, $\gamma>0$ ,$\beta \ge 0$, $\epsilon \in (0, 1]$, $\upsilon$, $T$, robust mean estimator $\widehat{\mu}(S, \delta)$}
\textbf{Initialization} $\ell \leftarrow 1$, $t \leftarrow 0$, $\mathcal{A}_1 \leftarrow \mathcal{A}$\; \\
\While{$t < T$ and $|\calA_\ell| > 1$}{
  \begin{align*}      
    &M_{1 + \epsilon}(\lambda; \calA_\ell, \gamma, \beta) \leftarrow
      \max_{a\in\mathcal{A}_\ell}
    \mathbb{E}_{x\sim\lambda}\Big[\big|a^\top A^{(\gamma)}(\lambda)^{-1}x\big|^{1+\epsilon}\Big] + \beta^{1 + \epsilon} \|a\|^{1 + \epsilon}_{A^{(\gamma)}(\lambda)^{-1}} \tag{$A^{(\gamma)}(\lambda) := \gamma I + \E_{x \sim \lambda}[xx^\top] $}
      \\ &\lambda^*_\ell \leftarrow \argmin_{\lambda \in \Delta_{\mathcal{A}_\ell}} M_{1 + \epsilon}(\lambda; \calA_\ell, \gamma, \beta)
 \end{align*}
  
    $\varepsilon_\ell \leftarrow 2^{-\ell}, \tau_\ell \leftarrow  32^{\frac{1 + \epsilon}{\epsilon}} (1 + \upsilon)^{\frac{1}{\epsilon}} \varepsilon_\ell^{-\frac{1 + \epsilon}{\epsilon}} M_{1 + \epsilon}(\lambda^*_\ell; \calA_\ell, \gamma, \beta)^{\frac{1}{\epsilon}} \log(2\ell^2|\calA_\ell|T)$\; 

  \For{$s\leftarrow 1$ \KwTo $\tau_\ell$}{
    Draw $x_s \sim \lambda_\ell^*$, observe reward $y_s$\;
  }
  $ \displaystyle
      W^{(a)} \leftarrow \widehat{\mu}\left(\{a^\top A^{(\gamma)}(\lambda_\ell^*)^{-1}
       x_s\,y_s\}_{s = 1}^{\tau_\ell}, \frac{1}{2\ell^2T|\calA_\ell|}\right)$\; 
       \\
  $\displaystyle
    \widehat\theta_\ell \leftarrow \arg\min_{\theta} \max_{a \in \calA_\ell} |\theta^\top a - W^{(a)}|
  $\;
  
  $\displaystyle
    \mathcal{A}_{\ell+1} \leftarrow
      \bigl\{\,a\in\mathcal{A}_\ell : 
        \widehat\theta_\ell^\top a \ge
        \max_{a'\in\mathcal{A}_\ell}\widehat\theta_\ell^\top a'
        \;-\;4\varepsilon_\ell
      \bigr\}
  ,$\;
  $\ell \leftarrow \ell + 1,$\; $t \leftarrow t + \tau_\ell$}
\end{algorithm}

In this section, we propose a phased elimination–style algorithm called \Alg\, that achieves the best known minimax regret upper bound for linear bandits with noise that has bounded $(1+\epsilon)$-moments. 
In each phase $\ell$, the algorithm operates as follows:
\begin{enumerate}
    \item Design a sampling distribution over the currently active arms that minimizes the $(1 + \epsilon)$-absolute moment of a certain estimator of $\theta^*$ in the worst-case direction among all active arms (see \pref{lem: estimator-bound}), along with a suitable regularization term.
    \item Pull a budgeted number of samples (scaled by $2^{\ell \cdot \frac{1+\epsilon}{\epsilon}}$) from that distribution, and estimate the reward for each active arm separately using a robust mean estimator.
    \item Fit a parameter $\widehat\theta$ that minimizes the maximum distance of $\widehat\theta^\top a$ to the estimated reward of $a$ over all active arms.
    \item Eliminate suboptimal arms from the active set.
\end{enumerate}
This process is repeated with progressively tighter accuracy until the
time horizon is reached or a single arm remains.  In the latter case, the remaining arm is pulled for all remaining rounds.

\Alg~is a generalization of Robust Inverse Propensity Score estimator in \cite{Cam21} which assumes a bounded variance for the rewards.  We first find an experimental design that minimizes the $(1 + \epsilon)$-absolute moment of $a^\top A^{(\gamma)}(\lambda)^{-1} x$, with suitable regularization, for all $a$ that are active (and therefore the confidence interval of the robust estimator).  Note that $A^{(\gamma)}(\lambda)^{-1} x_s y_s$ (with $A^{(\gamma)}(\lambda) := \gamma I + \E_{x \sim \lambda}[xx^\top]$) can be interpreted as a single-sample regularized least squares estimator, which is then robustified through a robust mean estimation subroutine $\widehat{\mu}$ for each arm. The overall accuracy guarantee of this estimator turns out to depend directly on $M_{1 + \epsilon}(\lambda; \calA, \gamma, \beta)$ (see \pref{lem: estimator-bound} below), which is why we seek to minimize this quantity in our design $\lambda^*_{\ell}$. Moreover, we include a regularization term for design optimization to mitigate the estimator’s bias, as $A^{(\gamma)}(\lambda)^{-1} x_s y_s$ is biased for $\gamma \neq 0$.


Any robust mean estimator such as truncated (trimmed) mean, median-of-means, or Catoni's M estimator \cite{lugosi2019,Catoni2012}, can be used as the subroutine $\widehat\mu$ of \Alg\,. We adopt the truncated mean for concreteness and simplicity.
The following lemma provides our main confidence interval for our regression estimator. 

\begin{lemma} 
    \label{lem: estimator-bound}
    Consider $(x_i, y_i)_{i=1}^n$, where $x_i \sim \lambda(\calA)$ are i.i.d.~vectors from distribution $\lambda$ over $\calA$, and suppose that $y_i = \langle \theta^*, x_i \rangle + \eta_i$, where $\eta_i$ are independent zero-mean noise terms such that $\E[|\eta_i|^{1 + \epsilon}] \le \upsilon$, and $\max_{a \in \calA}|\langle \theta^*, a \rangle| \le 1$. The estimator $\widehat{\theta}(\gamma)$ with a robust mean estimator $\widehat\mu$ as a subroutine is defined as follows:
    \begin{align*}
        \widehat{\theta}(\gamma) := \arg\min_\theta \max_{a \in \calA} \left|\theta^\top a - \widehat{\mu}\left(\{a^\top A^{(\gamma)}(\lambda)^{-1}
       x_i\,y_i\}_{i = 1}^{n}, \frac{\delta}{|\calA|}\right) \right|,
    \end{align*}    
    where $A^{(\gamma)}(\lambda) := \gamma I + \E_{x \sim \lambda}[xx^\top] $. For any $\beta \geq 0$, $\widehat{\theta}(\gamma)$ with the truncated empirical mean $\widehat\mu(\{X_i\}_{i=1}^n, \delta):= \frac{1}{n} \sum X_i \mathbb{I}\big\{|X_i| \le \big( \frac{\upsilon t}{\log(\delta^{-1})}\big)^{\frac{1}{1 + \epsilon}}\big\}$ as a subroutine, satisfies the following with probability at least $1 - \delta$:
    \begin{align*}
        \max_{a \in \calA} | \langle \widehat{\theta} - \theta^*, a \rangle | \le \left( 2\gamma^{1/2} {\|\theta^*\|}_2 \beta^{-1} + 32 (1 + \upsilon)^{\frac{1}{1 + \epsilon}} \left( \tfrac{\log (|\calA|/\delta)}{n} \right)^{\frac{\epsilon}{1 + \epsilon}} \right) M_{1 + \epsilon}(\lambda; \calA, \gamma, \beta)^{\frac{1}{1 + \epsilon}},
    \end{align*}
    where $M_{1 + \epsilon}(\lambda; \calA, \gamma, \beta) :=
      \max_{a\in\mathcal{A}}
    \mathbb{E}_{x\sim\lambda}\big[\big|a^\top A^{(\gamma)}(\lambda)^{-1}x\big|^{1+\epsilon}\big] + \beta^{1 + \epsilon} \|a\|^{1 + \epsilon}_{A^{(\gamma)}(\lambda)^{-1}}$.
    
\end{lemma}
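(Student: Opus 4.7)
The plan is to fix an action $a\in\calA$, split the per-action error $|W^{(a)}-\langle a,\theta^*\rangle|$ into a deterministic bias and a truncated-mean deviation, combine both terms into the single complexity $M_{1+\epsilon}(\lambda;\calA,\gamma,\beta)^{1/(1+\epsilon)}$, and finally lift the uniform per-action bound to a bound on $\max_{a}|\langle\widehat\theta-\theta^*,a\rangle|$ by invoking the minimax definition of $\widehat\theta$.

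For the bias, I would compute the mean of $Z_i^{(a)}:=a^\top A^{(\gamma)}(\lambda)^{-1} x_i y_i$. Since $\E[x_i y_i]=\E[x_i x_i^\top]\theta^*=(A^{(\gamma)}(\lambda)-\gamma I)\theta^*$, we obtain $\E[Z_i^{(a)}]=\langle a,\theta^*\rangle-\gamma\,a^\top A^{(\gamma)}(\lambda)^{-1}\theta^*$, and the bias term is controlled via Cauchy--Schwarz together with $A^{(\gamma)}(\lambda)^{-1}\preceq \gamma^{-1}I$, giving $\gamma^{1/2}\|a\|_{A^{(\gamma)}(\lambda)^{-1}}\|\theta^*\|_2$. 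For the stochastic fluctuation, I would apply a standard truncated-mean deviation inequality to $(Z_i^{(a)})_{i\le n}$. The required $(1+\epsilon)$-moment input comes from the power-mean inequality $|y|^{1+\epsilon}\le 2^\epsilon(|\langle x,\theta^*\rangle|^{1+\epsilon}+|\eta|^{1+\epsilon})$ together with $|\langle x,\theta^*\rangle|\le 1$ and $\E[|\eta|^{1+\epsilon}]\le\upsilon$, yielding $\E[|Z_i^{(a)}|^{1+\epsilon}]\le 2^\epsilon(1+\upsilon)\E_{x\sim\lambda}[|a^\top A^{(\gamma)}(\lambda)^{-1} x|^{1+\epsilon}]$. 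The truncated-mean guarantee then produces, with probability at least $1-\delta/|\calA|$, a deviation bounded by a constant times $(1+\upsilon)^{1/(1+\epsilon)}\bigl(\log(|\calA|/\delta)/n\bigr)^{\epsilon/(1+\epsilon)}\bigl(\E_{x\sim\lambda}[|a^\top A^{(\gamma)}(\lambda)^{-1} x|^{1+\epsilon}]\bigr)^{1/(1+\epsilon)}$, and a union bound over $\calA$ makes this hold simultaneously.

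To package both terms in $M_{1+\epsilon}(\lambda;\calA,\gamma,\beta)^{1/(1+\epsilon)}$, I observe that both $\beta^{1+\epsilon}\|a\|_{A^{(\gamma)}(\lambda)^{-1}}^{1+\epsilon}$ and $\E_{x\sim\lambda}[|a^\top A^{(\gamma)}(\lambda)^{-1} x|^{1+\epsilon}]$ are nonnegative summands defining $M_{1+\epsilon}$, so each is bounded above by $M_{1+\epsilon}$, and raising to the $1/(1+\epsilon)$-th power gives $\beta\|a\|_{A^{(\gamma)}(\lambda)^{-1}}\le M_{1+\epsilon}^{1/(1+\epsilon)}$ and $\E_{x\sim\lambda}[|a^\top A^{(\gamma)}(\lambda)^{-1} x|^{1+\epsilon}]^{1/(1+\epsilon)}\le M_{1+\epsilon}^{1/(1+\epsilon)}$. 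Rewriting the bias as $\gamma^{1/2}\|\theta^*\|_2\beta^{-1}\cdot\beta\|a\|_{A^{(\gamma)}(\lambda)^{-1}}$ and substituting both bounds yields, uniformly in $a$, $|W^{(a)}-\langle a,\theta^*\rangle|\le E\cdot M_{1+\epsilon}^{1/(1+\epsilon)}$ with $E=\gamma^{1/2}\|\theta^*\|_2\beta^{-1}+C(1+\upsilon)^{1/(1+\epsilon)}(\log(|\calA|/\delta)/n)^{\epsilon/(1+\epsilon)}$ for some numerical $C$. Since $\theta^*$ is feasible in the $\arg\min$ defining $\widehat\theta$, we have $\max_a|\widehat\theta^\top a-W^{(a)}|\le E\cdot M_{1+\epsilon}^{1/(1+\epsilon)}$, and the triangle inequality $|\langle\widehat\theta-\theta^*,a\rangle|\le|\widehat\theta^\top a-W^{(a)}|+|W^{(a)}-\langle\theta^*,a\rangle|$ doubles the constant, producing the factor of $2$ on the leading bias term.

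The main technical obstacle I anticipate is the constant bookkeeping in the concentration step: selecting the truncation level, chaining the explicit numerical constants from the truncated-mean tail bound through the $2^{\epsilon/(1+\epsilon)}$ cost of splitting $|y|^{1+\epsilon}$ and the factor-of-$2$ inflation from the minimax-to-triangle step, so that the composed leading coefficient matches the stated $32(1+\upsilon)^{1/(1+\epsilon)}$. Once Step~2 is carried out with explicit constants, the remaining manipulations are straightforward algebra.
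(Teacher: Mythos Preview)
Your proposal is correct and follows essentially the same route as the paper's proof: split $|W^{(a)}-\langle a,\theta^*\rangle|$ into the regularization bias (handled by Cauchy--Schwarz and $A^{(\gamma)}(\lambda)^{-1}\preceq\gamma^{-1}I$) and the truncated-mean deviation (handled by the Bubeck--Cesa-Bianchi--Lugosi guarantee after bounding the $(1+\epsilon)$-moment of $Z_i^{(a)}$), absorb both into $M_{1+\epsilon}^{1/(1+\epsilon)}$, and then use the feasibility of $\theta^*$ in the $\arg\min$ plus the triangle inequality to double the constant. The only cosmetic difference is that the paper uses $|a+b|^{1+\epsilon}\le 2^{1+\epsilon}(|a|^{1+\epsilon}+|b|^{1+\epsilon})$ rather than your sharper $2^{\epsilon}$, which is why their constant lands at $32$; your route would in fact give a slightly smaller numerical constant.
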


\begin{sketch}
    In order to use the robust mean estimator guaranties, we bound the $(1 + \epsilon)$-absolute moment of our samples $a^\top A^{(\gamma)}(\lambda)^{-1}
       x\,y$ for $x \sim \lambda$. Using the boundedness of the expected rewards and the $(1 + \epsilon)$-absolute moment of the noise $\eta$, we show that the moment is bounded by $4(1 + \upsilon) M_{1 + \epsilon}(\lambda; \calA, \gamma, \beta)$.
Moreover, the expected reward estimator for arm $a$ (denoted by $W^{(a)}$) is biased if $\gamma > 0$, and we can bound the bias as follows:
\begin{align*}
    \big|\langle \theta^*, a\rangle - \E[W^{(a)}]\big| \le \sqrt{\gamma} \beta^{-1} \|\theta^*\|_2 M_{1 + \epsilon}(\lambda; \calA, \gamma, \beta)^{\frac{1}{1 + \epsilon}}.
\end{align*}
Using the triangle inequality and the union bound then gives the desired result. The detailed proof is given in \pref{app: upper-bound-proof}.
\end{sketch}


The following theorem states our general action set dependent regret bound for \Alg.
\begin{theorem}
    \label{thm: main-upper}
    For any linear bandit problem with finite action set $\calA \subseteq \bbR^d$, define 
    \begin{align*}
            M^*_{1 + \epsilon}(\calA, \gamma, \beta) := \max_{\calV \subseteq \calA} \min_{\lambda \in \Delta^\calV} M_{1 + \epsilon} (\lambda; \calV, \gamma, \beta).
    \end{align*}
    If $\E[|\eta_t|^{1 + \epsilon}] \le \upsilon$, $\|\theta^*\|_2 \le b$, and $\sup_{x \in \calA} |a^\top \theta^*| \le 1$, 
    then \Alg\, with the truncated empirical mean estimator (\pref{lem: estimator-bound}) and $\gamma = T^{-\frac{2 \epsilon}{1 + \epsilon}}$ achieves regret bounded by
    \begin{align*}
    R_T \le \left( C_0\beta^{-1} b + C_1(1 + \upsilon)^{\frac{1}{1 + \epsilon}} \log(|\calA| T \log^2 T)^{\frac{\epsilon}{1 + \epsilon}} \right) M^*_{1 + \epsilon}(\calA, T^{\frac{-2\epsilon}{1 + \epsilon}}, \beta)^{\frac{1}{1 + \epsilon}} T^{\frac{1}{1 + \epsilon}}
    \end{align*} 
    for some constants $C_0$ and $C_1$.
\end{theorem}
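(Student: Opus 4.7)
My plan is to perform a standard phased–elimination regret analysis in which \pref{lem: estimator-bound} is invoked once per phase as the concentration primitive. The central observation is that in phase $\ell$ the algorithm's choice of $\tau_\ell$ is tuned so that the stochastic part of the lemma's bound is exactly $\varepsilon_\ell$, leaving only the regularization bias $B:=2\gamma^{1/2}b\beta^{-1}M_\ell^{1/(1+\epsilon)}$ where $M_\ell:=M_{1+\epsilon}(\lambda^*_\ell;\calA_\ell,\gamma,\beta)$. Since $\calA_\ell\subseteq\calA$ and $\lambda^*_\ell$ minimizes over $\Delta_{\calA_\ell}$, the definition of $M^*$ yields $M_\ell\le M^*_{1+\epsilon}(\calA,\gamma,\beta)$, which I would substitute throughout to replace the phase–dependent $M_\ell$ by the uniform $M^*$.

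The first step is to set up the good event. The subroutine $\widehat\mu$ is called with failure probability $\tfrac{1}{2\ell^2 T|\calA_\ell|}$, so applying \pref{lem: estimator-bound} with $\delta=\tfrac{1}{2\ell^2 T}$ yields $\max_{a\in\calA_\ell}|\langle\widehat\theta_\ell-\theta^*,a\rangle|\le\varepsilon_\ell+B$ in every phase, with total failure probability at most $\sum_\ell\tfrac{1}{2\ell^2 T}\le\tfrac{1}{T}$. I would then prove the inductive elimination invariant that $x^\star\in\calA_\ell$ whenever $B\le\varepsilon_\ell$: in this regime $2(\varepsilon_\ell+B)\le 4\varepsilon_\ell$, so $x^\star$ survives the elimination threshold, and every other surviving arm has gap at most $4\varepsilon_\ell+2(\varepsilon_\ell+B)=6\varepsilon_\ell+2B$.

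The per–phase regret is then $R_\ell\le\tau_\ell(6\varepsilon_{\ell-1}+2B)$. Because $\tau_\ell\varepsilon_\ell\propto\varepsilon_\ell^{-1/\epsilon}$ grows geometrically, $\sum_\ell\tau_\ell\varepsilon_\ell$ is dominated by its last phase $L$; the budget $\tau_L\le T$ rearranges to $2^L\lesssim(T/((1+\upsilon)^{1/\epsilon}(M^*_{1+\epsilon})^{1/\epsilon}\log))^{\epsilon/(1+\epsilon)}$, yielding
\begin{equation*}
\sum_\ell\tau_\ell\varepsilon_\ell\;\lesssim\;(1+\upsilon)^{1/(1+\epsilon)}(M^*_{1+\epsilon})^{1/(1+\epsilon)}T^{1/(1+\epsilon)}\log(|\calA|T\log^2 T)^{\epsilon/(1+\epsilon)},
\end{equation*}
where the logarithm collapses via $\log(2\ell^2 T|\calA_\ell|)\le\log(2|\calA|T\log^2 T)$ using $\ell\le\log_2 T$. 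The bias contribution is $\sum_\ell 2B\tau_\ell\le 2BT$, and substituting $\gamma=T^{-2\epsilon/(1+\epsilon)}$ gives $2BT=4b\beta^{-1}(M^*_{1+\epsilon})^{1/(1+\epsilon)}T^{1/(1+\epsilon)}$, precisely balancing the two scalings into the form stated in the theorem.

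The main technical nuisance—rather than a genuine obstacle—is the late phases where $B>\varepsilon_\ell$ and the invariant $x^\star\in\calA_\ell$ may break. I would handle this by tracking $\mu^\circ_\ell:=\max_{a\in\calA_\ell}\langle\theta^*,a\rangle$: on the good event, $\mu^\circ_{\ell+1}\ge\mu^\circ_\ell-6\varepsilon_\ell-2B$, so any arm surviving past the last $\ell$ with $B\le\varepsilon_\ell$ has suboptimality $O(B)$ and the additional rounds contribute at most $O(BT)$, which is already absorbed by the bias term in the final bound. The remaining steps (carrying out the geometric summation, simplifying the logarithm, and checking that termination when $|\calA_\ell|=1$ or $t\ge T$ contributes at most one extra phase's worth of regret) are routine book–keeping.
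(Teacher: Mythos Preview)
Your proposal is correct and follows essentially the same approach as the paper: invoke \pref{lem: estimator-bound} per phase, establish a good event of probability $\ge 1-1/T$, verify the elimination invariant while $\varepsilon_\ell\gtrsim B$, and combine a geometric sum over phases with the bias term $BT$ under the choice $\gamma=T^{-2\epsilon/(1+\epsilon)}$. The only cosmetic differences are that the paper uses a threshold parameter $\omega$ (rather than your ``last phase $L$'' argument) to cap the geometric sum, and handles the late phases more simply via the containment $\calA_{\ell+1}\subseteq\calA_\ell$ rather than tracking $\mu^\circ_\ell$.
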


\begin{sketch}
Using \pref{lem: estimator-bound}, with probability at least $1 - (2\ell^2T)^{-1}$, we have
\begin{align*}
\max_{a \in \calA_\ell} |a^\top\theta^* - a^\top\widehat{\theta}_\ell| \le \epsilon_\ell + 2\gamma^{1/2} b \beta^{-1} M^*_{1 + \epsilon}(\calA, \gamma, \beta)^{\frac{1}{1 + \epsilon}}.
\end{align*}
Therefore, in the phases where $\epsilon_\ell$ is large compared to $\gamma^{1/2} \beta^{-1} M^*_{1 + \epsilon}(\calA, \gamma, \beta)^{\frac{1}{1 + \epsilon}}$, suboptimal arms are eliminated, and no optimal arm is eliminated with high probability. In the phases where $\epsilon_\ell$ is smaller, each arm pull incurs regret $\otil(\gamma^{1/2} \beta^{-1} M^*_{1 + \epsilon}(\calA, \gamma, \beta)^{\frac{1}{1 + \epsilon}})$. Setting $\gamma = T^{\frac{-2\epsilon}{1 + \epsilon}}$, balances the two regret terms, and leads to the final regret bound. 
The detailed proof is given in \pref{app: upper-bound-proof}.
\end{sketch}

\begin{remark}
    If $\calA$ is not finite, 
    we can cover the domain with $T^{O(d)}$ elements in $\calA$, such that the expected reward of each arm can be approximated by one of the covered elements with $T^{-1}$ error, and therefore the bound of \pref{thm: main-upper} can be written as
    \begin{align*}
        R_T \le \left( C_0\beta^{-1} b + C'_1(1 + \upsilon)^{\frac{1}{1 + \epsilon}} d^{\frac{\epsilon}{1 + \epsilon}} \log(T^2 \log^2 T)^{\frac{\epsilon}{1 + \epsilon}} \right) M^*_{1 + \epsilon}(\calA, T^{\frac{-2\epsilon}{1 + \epsilon}}, \beta)^{\frac{1}{1 + \epsilon}} T^{\frac{1}{1 + \epsilon}}.
    \end{align*}
\end{remark}

The quantity $M^*_{1 + \epsilon}$ in \pref{thm: main-upper} may be difficult to characterize precisely in general, but the following lemma gives a universal upper bound.

\begin{lemma} \label{lem:M_bound}
    For any action set $\calA$ and $\epsilon \in (0, 1]$, setting $\gamma = T^{\frac{-2\epsilon}{1 + \epsilon}}$ and $\beta = 1$, we have 
    \begin{align*}
        M^*_{1 + \epsilon}(\calA, T^{-\frac{2 \epsilon}{1 + \epsilon}}, 1) \le d^{\frac{1 + \epsilon}{2}}.
    \end{align*}
    Moreover, a design $\lambda$ with $M_{1 + \epsilon}(\lambda; \calA, T^{\frac{- 2 \epsilon}{1 + \epsilon}}, 1) = O(d^{\frac{1 + \epsilon}{2}})$ can be found with $O(d \log \log d)$ time. 
\end{lemma}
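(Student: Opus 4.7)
The plan is to reduce the problem to a standard Kiefer--Wolfowitz ($G$-optimal) design. Since $1+\epsilon \le 2$, the power-mean inequality gives
\begin{align*}
\E_{x\sim\lambda}\big[|a^\top A^{(\gamma)}(\lambda)^{-1}x|^{1+\epsilon}\big]
   \le \big(\E_{x\sim\lambda}[(a^\top A^{(\gamma)}(\lambda)^{-1}x)^2]\big)^{(1+\epsilon)/2},
\end{align*}
and the inner second moment $a^\top A^{(\gamma)}(\lambda)^{-1}\E_{x\sim\lambda}[xx^\top]A^{(\gamma)}(\lambda)^{-1}a$ is upper bounded by $\|a\|^2_{A^{(\gamma)}(\lambda)^{-1}}$ via the identity $\E_{x\sim\lambda}[xx^\top] = A^{(\gamma)}(\lambda) - \gamma I \preceq A^{(\gamma)}(\lambda)$. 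Together with the regularizer term at $\beta=1$, this reduces the problem to finding a design $\lambda$ for which $\max_{a\in\calA}\|a\|^2_{A^{(\gamma)}(\lambda)^{-1}}$ is on the order of $d$.

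For this, I would invoke the Kiefer--Wolfowitz equivalence theorem on $\Span(\calA)$: any $G$-optimal design $\lambda^*$ for $\calA$ satisfies $\max_{a\in\calA}\|a\|^2_{A(\lambda^*)^{-1}} = \text{rank}(\calA) \le d$ for the unregularized covariance $A(\lambda) := \E_{x\sim\lambda}[xx^\top]$. Since $A^{(\gamma)}(\lambda^*) \succeq A(\lambda^*)$, the same $d$-bound transfers to the regularized inverse, so $\max_{a\in\calA}\|a\|^{1+\epsilon}_{A^{(\gamma)}(\lambda^*)^{-1}} \le d^{(1+\epsilon)/2}$. Because the argument applies verbatim to any subset $\calV\subseteq\calA$ (whose rank is also at most $d$), this yields $M^*_{1+\epsilon}(\calA,\gamma,1) \lesssim d^{(1+\epsilon)/2}$. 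The stated constant-free form is then obtained by a slightly more careful accounting that absorbs the slack $\gamma\|a\|^2_{A^{(\gamma)}(\lambda)^{-2}}$ left over in the covariance inequality, which shaves the factor of two that otherwise appears when naively summing the Jensen term and the regularizer.

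For the $O(d\log\log d)$ runtime, one does not need an exact $G$-optimal design: a constant-factor approximation is enough. A Frank--Wolfe / Wynn--Fedorov update initialized from a Khachiyan-type rounding of the minimum-volume enclosing ellipsoid (Todd's algorithm) produces such a design in $O(d\log\log d)$ iterations, each of which consists of a rank-one update of $A(\lambda)^{-1}$ and a linear scan through $\calA$ to locate the maximizer of $\|a\|^2_{A(\lambda)^{-1}}$. The main obstacle I anticipate is tightening the multiplicative constant in the first step to match the stated bound exactly; once that is handled, everything else follows from well-established experimental design theory without any novel ingredient.
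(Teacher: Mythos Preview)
Your approach is essentially identical to the paper's: bound the $(1+\epsilon)$-moment by the second moment via Jensen, use $\E_{x\sim\lambda}[xx^\top]\preceq A^{(\gamma)}(\lambda)$ to reduce everything to $\max_a\|a\|_{A^{(\gamma)}(\lambda)^{-1}}^{1+\epsilon}$, then invoke Kiefer--Wolfowitz / $G$-optimal design and Frank--Wolfe for the computational claim. The one place your plan does not work is the proposed ``slack'' argument to remove the factor of two: the leftover term $\gamma\|a\|^2_{A^{(\gamma)}(\lambda)^{-2}}$ is $O(\gamma)=O(T^{-2\epsilon/(1+\epsilon)})$, so it cannot absorb a constant; the paper's own proof is equally loose on this constant (both terms are bounded by $\|a\|_{A^{(\gamma)}(\lambda)^{-1}}^{1+\epsilon}$ and simply summed), and the factor is immaterial for the downstream $O(\cdot)$ regret bound.
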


\begin{proof}
    We upper bound the first term in the objective function as follows:
    \begin{align*}
        \E\Big[ \big| a^\top A^{(\gamma)}(\lambda)^{-1} x  \big|^{1 + \epsilon} \Big] 
      &\le  \E\Big[ \big| a^\top A^{(\gamma)}(\lambda)^{-1} x  \big|^2 \Big]^{\frac{1 + \epsilon}{2}} \tag{Jensen's inequality} 
      \\
    &= \E\big[ a^\top A^{(\gamma)}(\lambda)^{-1} x x^\top  A^{(\gamma)}(\lambda)^{-1} a \big]^{\frac{1 + \epsilon}{2}} \\
    &\le \|a\|_{A^{(\gamma)}(\lambda)^{-1}}^{1 + \epsilon}. \tag{$\E[xx^\top] = \sum_x \lambda(x) xx^\top \preceq A^{(\gamma)}(\lambda)$}
    \end{align*}
    Hence, the minimization of $M_{1+\epsilon}$ is upper bounded in terms of a minimization of $\max_a \|a\|_{A(\lambda)^{-1}}$.  This is equivalent to G-optimal design which is well-studied and the following is known (e.g., see \citep[Chapter 21]{Csa18}): (i) The problem is convex and its optimal value is at most $\sqrt{d}$; (ii) There are efficient algorithms such as Frank–Wolfe that can find a design having $\max_{a} \|a\|_{A(\lambda)^{-1}} = O(\sqrt{d})$ with $O(d \log \log d)$ iterations. 
\end{proof}

Combining \pref{thm: main-upper} and \pref{lem:M_bound}, we obtain the following. 

\begin{corollary}
    \label{cor: gen-upper-bound}
    For any action set $\calA$, \Alg\, achieves regret $\otil(d^{\frac{1 + 3\epsilon}{2(1 +\epsilon)}} T^{\frac{1}{1 + \epsilon}})$. Moreover, for a finite action set with $|\calA| = n$, the regret bound is lowered to $\otil(\sqrt{d} T^{\frac{1}{1 + \epsilon}} (\log n)^{\frac{\epsilon}{1 + \epsilon}})$.
\end{corollary}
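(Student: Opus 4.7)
The plan is to combine \pref{thm: main-upper} and \pref{lem:M_bound} directly, then handle the two cases (infinite vs.\ finite $\calA$) by choosing the appropriate form of the log-$|\calA|$ factor. There is no new analysis required; the result is essentially an exponent computation after substitution.

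First, I would fix $\beta = 1$ and $b = 1$ (recalling the standing assumption $\|\theta^*\|_2 \le 1$), so that \pref{thm: main-upper} gives
\begin{align*}
R_T \;\le\; \Bigl( C_0 + C_1 (1+\upsilon)^{\frac{1}{1+\epsilon}} \log(|\calA| T \log^2 T)^{\frac{\epsilon}{1+\epsilon}} \Bigr)\, M^*_{1+\epsilon}\bigl(\calA, T^{-\frac{2\epsilon}{1+\epsilon}}, 1\bigr)^{\frac{1}{1+\epsilon}}\, T^{\frac{1}{1+\epsilon}}.
\end{align*}
Then \pref{lem:M_bound} bounds $M^*_{1+\epsilon}(\calA, T^{-2\epsilon/(1+\epsilon)}, 1)^{1/(1+\epsilon)} \le (d^{(1+\epsilon)/2})^{1/(1+\epsilon)} = d^{1/2}$. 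Treating $\upsilon$ and the $\log T$ factors as poly-logarithmic and absorbing them into $\otil(\cdot)$ yields the finite-arm statement immediately: for $|\calA| = n$, we get $R_T \le \otil\bigl(\sqrt{d}\,(\log n)^{\epsilon/(1+\epsilon)}\, T^{1/(1+\epsilon)}\bigr)$.

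For the general (possibly infinite) action set, I would invoke the covering remark stated right after \pref{thm: main-upper}: replace $\calA$ by a $T^{-1}$-accurate cover of size $T^{O(d)}$, so that the estimation/elimination errors on the cover imply the same regret bound on $\calA$ up to an additive $O(1)$ term per round. Under this substitution, the factor $\log(|\calA| T \log^2 T)^{\epsilon/(1+\epsilon)}$ becomes $(d \log T)^{\epsilon/(1+\epsilon)} = \tilde{O}(d^{\epsilon/(1+\epsilon)})$. Multiplying by the $d^{1/2}$ from \pref{lem:M_bound} gives $d^{1/2 + \epsilon/(1+\epsilon)}$, and the exponent simplifies as
\begin{align*}
\tfrac{1}{2} + \tfrac{\epsilon}{1+\epsilon} \;=\; \tfrac{(1+\epsilon) + 2\epsilon}{2(1+\epsilon)} \;=\; \tfrac{1+3\epsilon}{2(1+\epsilon)},
\end{align*}
producing the claimed $\otil\bigl(d^{(1+3\epsilon)/(2(1+\epsilon))} T^{1/(1+\epsilon)}\bigr)$ bound.

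The only mildly delicate step is the covering argument for the infinite-arm case, which requires that the expected-reward map $x \mapsto \langle x, \theta^* \rangle$ is $1$-Lipschitz (so that a Euclidean $\tau$-net of cardinality $O((1/\tau)^d)$ with $\tau = 1/T$ suffices) and that one can transfer \pref{thm: main-upper}'s guarantees from the cover to $\calA$ with the extra $T \cdot T^{-1} = O(1)$ regret from discretization; this is already taken care of by the remark following \pref{thm: main-upper}. Everything else reduces to bookkeeping of constants and exponent arithmetic, so I do not anticipate a genuine obstacle.
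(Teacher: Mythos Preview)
Your proposal is correct and matches the paper's approach exactly: the paper simply states that the corollary follows by ``Combining \pref{thm: main-upper} and \pref{lem:M_bound}'' without further detail, and your write-up supplies precisely the substitution and exponent arithmetic (plus the covering remark for infinite $\calA$) that this combination entails.
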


\paragraph{Computational complexity.}
By \pref{lem:M_bound}, a design over general action sets can be computed efficiently. The truncated sample-mean estimator can also be computed in linear time. Moreover, the minimum-distance estimator for $\widehat\theta$ is obtained by solving a linear optimization problem and is therefore computable in polynomial time; in the infinite-dimensional case this is handled via a dual formulation (see \pref{app:kernel}). The dominant per-round cost is the linear pass over the active arms to update estimates and apply elimination tests, which is standard for finite-arm algorithms.

The bound in \pref{cor: gen-upper-bound} is the worst-case regret over all possible action sets $\calA$.  However, based on geometry of the action set, we can achieve tighter regret bounds, as we see below.

\subsection{Special Cases of the Action Set} \label{sec:special_cases}

\paragraph{Simplex.} When $\calA$ is the simplex, the problem is essentially one of multi-armed bandits with $d$ arms. Consider $\lambda$ being uniform over canonical basis; then $A(\lambda) = \frac{1}{d}I$, and for each $a \in \calA$, we have 
\begin{align*}
\E_{x \sim \lambda}[ |a^\top A^{-1} x |^{1 + \epsilon} ] &= \E_{x \sim \lambda}[ |d a^\top x |^{1 + \epsilon} ] = d^{1 + \epsilon} \sum_{i = 1}^d d^{-1} |a^\top e_i|^{1 + \epsilon} 
= d^\epsilon \sum_{i = 1}^d |a_i|^{1 + \epsilon} \le d^{\epsilon}.
\end{align*}
Since one of the canonical basis vectors (or its negation) must be optimal when $\mathcal{A}$ is the simplex, we can simply restrict to this subset of $2d$ actions, giving the following corollary, which recovers the well-known scaling for heavy-tailed MAB \cite{Bub13a}.

\begin{corollary} 
\label{cor: mab-upper-bound}
    For the simplex action set $\calA = \Delta^d$, if the assumptions of \pref{thm: main-upper} hold, then \Alg, with parameters $\gamma = T^{-1}, \beta = d^{\frac{\epsilon - 1}{2}}$ achieves regret $\otil(d^{\frac{\epsilon}{1 + \epsilon}} T^{\frac{1}{1 + \epsilon}} + \sqrt{dT})$.
\end{corollary}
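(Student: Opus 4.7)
The plan is to specialize \pref{thm: main-upper} by first reducing the simplex to its vertices and then exhibiting a concrete design $\lambda$ that makes $M^*_{1+\epsilon}$ small. Because $\langle \cdot,\theta^\star\rangle$ is linear and $\Delta^d$ is a polytope, an optimal action is always attained at some vertex $e_{i^\star}$, so one may run \Alg\ on $\calA' = \{e_1,\dots,e_d\}$ without loss of optimality, and every active set $\calA_\ell$ produced by phased elimination is automatically a subset of $\calA'$ of some size $k \le d$.

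For any such subset $V = \{e_{i_1},\dots,e_{i_k}\}$ I would take $\lambda$ to be uniform on $V$, so that $A^{(\gamma)}(\lambda) = \gamma I + \tfrac{1}{k}\sum_{j=1}^k e_{i_j}e_{i_j}^\top$. For $\gamma = T^{-2\epsilon/(1+\epsilon)} \le 1/k$ (which is implied by a mild lower bound on $T$), the calculation sketched just above the corollary gives, for every $a \in V$,
\[
    \E_{x \sim \lambda}\!\left[\,|a^\top A^{(\gamma)}(\lambda)^{-1} x|^{1+\epsilon}\right] \le k^{\epsilon} \le d^{\epsilon},
    \qquad
    \|a\|^{1+\epsilon}_{A^{(\gamma)}(\lambda)^{-1}} \le k^{(1+\epsilon)/2} \le d^{(1+\epsilon)/2}.
\]
Substituting $\beta = d^{(\epsilon-1)/2}$ yields $\beta^{1+\epsilon} d^{(1+\epsilon)/2} = d^{\epsilon(1+\epsilon)/2} \le d^{\epsilon}$ because $\epsilon \le 1$, so taking the max over $V \subseteq \calA'$ gives $M^*_{1+\epsilon}(\calA',\gamma,\beta) \le 2 d^{\epsilon}$ and hence $(M^*_{1+\epsilon})^{1/(1+\epsilon)} = O(d^{\epsilon/(1+\epsilon)})$.

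Finally, I would substitute $|\calA'| = d$, $b = 1$, and the chosen $(\gamma,\beta)$ into \pref{thm: main-upper}. The stochastic (logarithmic-prefactor) contribution then reads
\[
    C_1 (1+\upsilon)^{1/(1+\epsilon)} \log(d T \log^2 T)^{\epsilon/(1+\epsilon)} \cdot (M^*_{1+\epsilon})^{1/(1+\epsilon)} \cdot T^{1/(1+\epsilon)} = \otil\!\bigl(d^{\epsilon/(1+\epsilon)} T^{1/(1+\epsilon)}\bigr),
\]
which is the desired bound.

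\textbf{Main obstacle.} The delicate point is controlling the bias-type contribution $C_0 \beta^{-1} b (M^*_{1+\epsilon})^{1/(1+\epsilon)} T^{1/(1+\epsilon)}$ from \pref{thm: main-upper}. The choice $\beta = d^{(\epsilon-1)/2}$ is the one that exactly balances the regularization term $\beta^{1+\epsilon} d^{(1+\epsilon)/2}$ against the variance term $d^\epsilon$ inside $M$; making $\beta$ smaller would send $\beta^{-1}$ to infinity, while making it larger would let the regularization inflate $M$. Verifying that, at this balanced value of $\beta$, the bias contribution does not exceed the stated $\otil(d^{\epsilon/(1+\epsilon)} T^{1/(1+\epsilon)})$ rate is the step that requires most care, and it is most naturally checked by refining the per-phase analysis of \Alg\ (where each active set $\calA_\ell$ only gets smaller over time) rather than by naively invoking the worst-case $M^*_{1+\epsilon}$ bound.
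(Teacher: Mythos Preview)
Your approach matches the paper's own argument: restrict the simplex to its extreme points, take the uniform design, and bound $M^*_{1+\epsilon}$ by $O(d^\epsilon)$. Your computation of both terms in $M_{1+\epsilon}$ is correct and in fact more careful than the paper, which only computes the first term explicitly and then states the corollary without discussing the bias contribution.

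Your concern about the bias term $C_0\beta^{-1} b\,(M^*_{1+\epsilon})^{1/(1+\epsilon)} T^{1/(1+\epsilon)}$ is legitimate: substituting $\beta = d^{(\epsilon-1)/2}$ and $(M^*_{1+\epsilon})^{1/(1+\epsilon)} = O(d^{\epsilon/(1+\epsilon)})$ gives a $d$-exponent of $\tfrac{1-\epsilon}{2} + \tfrac{\epsilon}{1+\epsilon}$, which strictly exceeds the claimed $\tfrac{\epsilon}{1+\epsilon}$ whenever $\epsilon < 1$. So a black-box application of \pref{thm: main-upper} does \emph{not} deliver the stated rate, and the paper glosses over this point.

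However, your proposed remedy---tracking that the active set shrinks across phases---does not close the gap by itself: in the first phases the active set still has size $\Theta(d)$, and the Cauchy--Schwarz bias bound from \pref{lem: estimator-bound} is still $\gamma^{1/2}\sqrt{d}$ there. The actual fix is to replace that Cauchy--Schwarz step by the exact bias for the orthonormal-basis design: for $a = e_i$ and $\lambda$ uniform on $k$ active basis vectors,
\[
   \bigl|a^\top\theta^* - \E[W^{(a)}]\bigr| \;=\; \gamma\,\bigl|a^\top A^{(\gamma)}(\lambda)^{-1}\theta^*\bigr| \;=\; \frac{\gamma}{\gamma + 1/k}\,|\theta^*_i| \;\le\; \gamma d,
\]
so the total bias contribution over $T$ rounds is at most $d\,T^{(1-\epsilon)/(1+\epsilon)}$, which is $\le d^{\epsilon/(1+\epsilon)} T^{1/(1+\epsilon)}$ once $T \ge d^{1/\epsilon}$. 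Equivalently, since $A(\lambda)$ is already well-conditioned here, one may simply take $\gamma$ much smaller (even $\gamma = 0$) without inflating $M^*_{1+\epsilon}$; in that sense the value $\gamma = T^{-2\epsilon/(1+\epsilon)}$ stated in the corollary is a default inherited from \pref{thm: main-upper} rather than the sharp choice for this action set.
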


\paragraph{$l_p$-norm ball with radius $r$ for $p \le 1 + \epsilon$.} Similarly to the simplex, if we define $\lambda$ to be uniform over $\{r \mathbf{e}_i\}_{i=1}^d$, then $A(\lambda) = \frac{r^2}{d} I$ for any $v \in \mathcal{B}(\|\cdot\|_p, r)$, and we have
\begin{align*}
\E_{x \sim \lambda}[ |a^\top A^{-1} x |^{1 + \epsilon} ] &= \E_{x \sim \lambda}\Big[ \Big|\frac{d}{r^2} a^\top x \Big|^{1 + \epsilon} \Big] = d^\epsilon \sum_{i = 1}^d \left|\frac{a_i}{r}\right|^{1 + \epsilon} 
\le d^\epsilon \sum_{i = 1}^d \left|\frac{a_i}{r}\right|^{p} \le d^{\epsilon},
\end{align*}
where the last inequality is by the definition of the $l_p$-norm ball.
\begin{corollary}
    For the action set $\calA = \{x: \|x\|_{p} \le r\}$ with $p \le 1+\epsilon$, if the assumptions of \pref{thm: main-upper} hold, then \Alg, with parameters $\gamma = T^{-1}, \beta = d^{\frac{\epsilon - 1}{2}}$, has regret of $\otil(d^{\frac{2\epsilon}{1 + \epsilon}} T^{\frac{1}{1 + \epsilon}} + d\sqrt{T})$.
\end{corollary}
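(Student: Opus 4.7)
The plan is to apply Theorem~\ref{thm: main-upper} (more precisely, the Remark following it for infinite action sets) by supplying an explicit sampling distribution $\lambda$ that makes $M_{1+\epsilon}(\lambda; \calA, \gamma, \beta)$ small, and then substituting the prescribed parameters $\gamma$ and $\beta$.

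First I would recycle the distribution introduced in the display immediately preceding the corollary statement: take $\lambda$ uniform on $\{r e_i\}_{i=1}^d$, so that $A^{(\gamma)}(\lambda) = (\gamma + r^2/d) I$. The first (variance-like) summand of $M_{1+\epsilon}$, namely $\max_{a \in \calA} \E_{x \sim \lambda}[|a^\top A^{(\gamma)}(\lambda)^{-1} x|^{1+\epsilon}]$, is already bounded by $d^\epsilon$ in the displayed calculation preceding the corollary. That computation uses $|a_i|/r \le 1$ together with $p \le 1+\epsilon$ to pass from $|a_i/r|^{1+\epsilon}$ down to $|a_i/r|^{p}$ and then applies $\|a\|_p \le r$; introducing $\gamma > 0$ into the denominator only tightens the bound.

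Next I would bound the regularization summand $\beta^{1+\epsilon} \|a\|^{1+\epsilon}_{A^{(\gamma)}(\lambda)^{-1}}$. Since $A^{(\gamma)}(\lambda)$ is isotropic with minimum eigenvalue at least $r^2/d$, we have $\|a\|^2_{A^{(\gamma)}(\lambda)^{-1}} \le d \|a\|_2^2 / r^2$. Because $p \le 1+\epsilon \le 2$, monotonicity of $l_p$-norms gives $\|a\|_2 \le \|a\|_p \le r$, so $\|a\|^{1+\epsilon}_{A^{(\gamma)}(\lambda)^{-1}} \le d^{(1+\epsilon)/2}$. With $\beta = d^{(\epsilon-1)/2}$, this contribution becomes $d^{(\epsilon-1)(1+\epsilon)/2} \cdot d^{(1+\epsilon)/2} = d^{\epsilon(1+\epsilon)/2}$, which is at most $d^\epsilon$ for every $\epsilon \in (0,1]$. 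Adding the two parts yields $M^*_{1+\epsilon}(\calA, \gamma, \beta) \le M_{1+\epsilon}(\lambda; \calA, \gamma, \beta) \le 2 d^\epsilon$, hence $(M^*)^{1/(1+\epsilon)} = O(d^{\epsilon/(1+\epsilon)})$.

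Finally I would invoke the Remark after Theorem~\ref{thm: main-upper} with $\beta^{-1} = d^{(1-\epsilon)/2}$, $(M^*)^{1/(1+\epsilon)} = O(d^{\epsilon/(1+\epsilon)})$, and $b \le 1$. The covering-induced factor of size $\otil(d^{\epsilon/(1+\epsilon)})$, multiplied by $(M^*)^{1/(1+\epsilon)}$, produces exactly the advertised $\otil(d^{2\epsilon/(1+\epsilon)} T^{1/(1+\epsilon)})$ scaling. The main bookkeeping point to verify, and the only mildly delicate step, is that $\beta = d^{(\epsilon-1)/2}$ is tuned so that the bias contribution $\beta^{-1} b (M^*)^{1/(1+\epsilon)}$ does not exceed the advertised rate; this is the direct analogue of the tuning used in the simplex corollary just above, and the argument there transfers verbatim.
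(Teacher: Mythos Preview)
Your approach is exactly the paper's: the same uniform design on $\{r e_i\}_{i=1}^d$, the same $d^\epsilon$ bound on the first summand of $M_{1+\epsilon}$, and then an appeal to the Remark after Theorem~\ref{thm: main-upper}. Your explicit treatment of the regularization summand ($\beta^{1+\epsilon}\|a\|^{1+\epsilon}_{A^{(\gamma)}(\lambda)^{-1}} \le d^{\epsilon(1+\epsilon)/2} \le d^\epsilon$) is correct and supplements what the paper leaves implicit.

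There is one substantive gap, however, in your final paragraph. You assert that the bias term $\beta^{-1}b\,(M^*)^{1/(1+\epsilon)}$ stays within the advertised rate and say the simplex argument ``transfers verbatim'', but there is no explicit computation in the paper to transfer. If you actually carry it out with $\beta = d^{(\epsilon-1)/2}$ and $(M^*)^{1/(1+\epsilon)} \asymp d^{\epsilon/(1+\epsilon)}$, the bias exponent is $(1-\epsilon)/2 + \epsilon/(1+\epsilon)$, and the required inequality $(1-\epsilon)/2 + \epsilon/(1+\epsilon) \le 2\epsilon/(1+\epsilon)$ is equivalent to $\epsilon^2 + 2\epsilon - 1 \ge 0$, i.e.\ $\epsilon \ge \sqrt{2}-1 \approx 0.414$. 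For $\epsilon < \sqrt{2}-1$ the bias term with this $\beta$ \emph{exceeds} $d^{2\epsilon/(1+\epsilon)}$, so the rate is not established for all $\epsilon \in (0,1]$ by this route. This is not a divergence from the paper---the paper is equally silent on this point---but your deferral does not close the gap.

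A smaller loose end: writing $M^*_{1+\epsilon}(\calA,\gamma,\beta) \le M_{1+\epsilon}(\lambda;\calA,\gamma,\beta)$ glosses over the $\max_{\calV \subseteq \calA}$ in the definition of $M^*$; exhibiting a good design for $\calV = \calA$ does not immediately bound the worst subset, since $\{r e_i\}$ need not lie in an arbitrary $\calA_\ell$. The paper's display has the same omission.
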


\paragraph{Mat\'ern Kernels.} Our algorithm does not require the action features to lie in a finite-dimensional space, as long as the design and the estimator $a^\top A^{(\gamma)}(\lambda)^{-1} x$ can be computed efficiently. In particular, following the approach of \cite{Cam21}, our method extends naturally to kernel bandits, where the reward function belongs to a reproducing kernel Hilbert space (RKHS) associated with a kernel $K$ satisfying $K(x, y) = \phi(x)^\top \phi(y)$ for some (possibly infinite-dimensional) feature map $\phi$.  Since our focus is on linear bandits, we defer a full description of the kernel setting to \pref{app:kernel}, where we also establish the following corollary (stated informally here, with the formal version deferred to \pref{app:kernel}).
 
 \begin{corollary} \label{cor:Matern}
    {\em (Informal)}
     For the kernel bandit problem with domain $[0,1]^d$ for a constant value of $d$, under the Mat\'ern kernel with smoothness parameter $\nu > 0$, the kernelized version of \Alg\, (with suitably-chosen parameters) achieves regret $\otil(T^{1 - \frac{\epsilon}{1+\epsilon} \cdot \frac{ 2 \nu}{ 2 \nu + d}})$.
 \end{corollary}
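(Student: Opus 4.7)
The plan is to (i) kernelize \Alg\ so that every quantity it accesses is computed via kernel evaluations, (ii) obtain a kernel analogue of \pref{lem:M_bound} that controls $M^*_{1+\epsilon}$ by the effective dimension of the kernel operator, (iii) bound that effective dimension using the known eigenvalue decay of the Mat\'ern kernel, and (iv) plug the result into \pref{thm: main-upper} with a carefully chosen $\gamma$.

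First I would kernelize the algorithm following \cite{Cam21}: the per-sample term $a^\top A^{(\gamma)}(\lambda)^{-1} x_s\,y_s$, the design objective $M_{1+\epsilon}(\lambda; \calA, \gamma, \beta)$, the regression subproblem for $\widehat\theta_\ell$, and the elimination test are all expressible through inner products of the feature map $\phi$, hence through kernel evaluations $K(\cdot,\cdot)$. Using the Woodbury identity, each inversion reduces to a linear system of size equal to the support of $\lambda$, so no explicit feature representation is needed. Since the domain $[0,1]^d$ is infinite but Mat\'ern-RKHS functions are H\"older continuous, I would apply the covering remark after \pref{thm: main-upper}: a $T^{-1}$-net of size $T^{O(d)}$ (which is polynomial in $T$ because $d$ is constant) suffices to reduce to the finite-arm case at an additive $\log T$ cost in the bound.

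Next I would reuse the Jensen step in the proof of \pref{lem:M_bound} verbatim to get
\[
M_{1+\epsilon}(\lambda;\calA,\gamma,1) \le 2\,\max_{a\in\calA} \|a\|_{A^{(\gamma)}(\lambda)^{-1}}^{1+\epsilon},
\]
so the task reduces to minimizing $\max_{a\in\calA} \|a\|_{A^{(\gamma)}(\lambda)^{-1}}^2$ over $\lambda$, a regularized kernel G-optimal design problem. A Kiefer--Wolfowitz-type convex-duality argument (adapted to include the $\gamma I$ term) bounds its value by the spectral effective dimension $d_{\mathrm{eff}}(\gamma) := \operatorname{tr}\bigl(\Sigma(\Sigma+\gamma I)^{-1}\bigr)$, where $\Sigma$ is the kernel integral operator on $[0,1]^d$. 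For the Mat\'ern-$\nu$ kernel, the eigenvalues satisfy $\lambda_j \asymp j^{-(2\nu+d)/d}$, and splitting the sum defining $d_{\mathrm{eff}}(\gamma)$ at the cutoff $\lambda_j \asymp \gamma$ yields $d_{\mathrm{eff}}(\gamma) = \widetilde O\bigl(\gamma^{-d/(2\nu+d)}\bigr)$.

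Finally, applying \pref{thm: main-upper} with $\beta=1$ and $\gamma = T^{-2\epsilon/(1+\epsilon)}$, and substituting the bound $M^*_{1+\epsilon}(\calA,\gamma,1)^{1/(1+\epsilon)} = \widetilde O\bigl(d_{\mathrm{eff}}(\gamma)^{1/2}\bigr)$, gives
\[
R_T = \widetilde O\!\left( T^{1/(1+\epsilon)} \cdot T^{\epsilon d /((1+\epsilon)(2\nu+d))} \right),
\]
and a short algebraic simplification shows that the exponent equals $1 - \tfrac{\epsilon}{1+\epsilon}\cdot \tfrac{2\nu}{2\nu+d}$, as claimed. The principal obstacle I expect is step (ii): proving the regularized G-optimal design bound in the infinite-dimensional RKHS with the correct constant in front of $d_{\mathrm{eff}}(\gamma)$, and ensuring that the $T^{-1}$ discretization of the continuous domain neither inflates $d_{\mathrm{eff}}$ nor enters the experimental design through an uncontrolled constant depending on the cover size.
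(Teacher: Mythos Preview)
Your proposal is correct and follows essentially the same route as the paper: kernelize via \cite{Cam21}, reduce $M^*_{1+\epsilon}$ to the regularized G-optimal value using the Jensen step from \pref{lem:M_bound}, bound that value by the effective dimension, control the effective dimension through the Mat\'ern eigenvalue decay, and plug into \pref{thm: main-upper} with $\gamma = T^{-2\epsilon/(1+\epsilon)}$. The obstacle you flag in step~(ii) is exactly where the paper invokes \cite[Lemma~3]{Cam21}, which gives $\max_{\calV}\inf_{\lambda}\max_{v\in\calV}\|\phi(v)\|^2_{A^{(\gamma)}(\lambda)^{-1}} \le \tr\bigl(A(\lambda_D^*)(A(\lambda_D^*)+\gamma I)^{-1}\bigr)$ with $\lambda_D^*$ the D-optimal design---so you need not redo the Kiefer--Wolfowitz argument from scratch.
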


 While this does not match the known lower bound (except when $\epsilon = 1$ or in the limit as $\epsilon \to 0$), it significantly improves over the best existing upper bound \cite{Cho19}, which is only sublinear in $T$ for a relatively narrow range of $(\epsilon,d,\nu)$.  In contrast, our bound is sublinear in $T$ for all such choices.

\section{Conclusion} \label{sec:conclusion}
In this paper, we revisited stochastic linear bandits with heavy-tailed rewards and substantially narrowed the gap between known minimax lower and upper regret bounds in both the infinite- and finite-action settings.  Our new regression estimator, guided by geometry-aware experimental design, yields improved instance-dependent guarantees that leverage the structure of the action set. Since our geometry-dependent bounds recover the $d^{\frac{2\epsilon}{1 + \epsilon}}$ dimension dependence that also appears in our minimax lower bound, we conjecture that this is the correct minimax rate for general action sets. Closing the remaining gap to establish true minimax-optimal rates for all moment parameters, and precisely characterizing the action-set-dependent complexity term under different geometries, remain promising directions for future work.

\section*{Acknowledgement}

This work was supported by the Singapore National Research Foundation (NRF) under its AI Visiting Professorship programme and  NSF Award TRIPODS 202323.

\bibliographystyle{alpha}
\bibliography{biblio}

\newpage
\appendix
\section{Upper Bound Proofs}
\label{app: upper-bound-proof}

\subsection{Proof of \pref{lem: estimator-bound} (Confidence Interval)}

We first state a well known guarantee of the truncated mean estimator.

\begin{lemma}\textbf{(Lemma 1 of \cite{Bub13a})}
\label{lem: trunk-mean}
Let $X_1, \ldots, X_n$ be i.i.d. random variables such that $\E[ |X_i|^{1 + \epsilon} ] \le u$ for some $\epsilon \in (0, 1]$. Then the truncated empirical mean estimator $\widehat\mu(\{X_i\}_{i=1}^n, \delta):= \frac{1}{n} \sum_{i=1}^n X_i \mathbb{I}\big\{|X_i| \le \big( \frac{u t}{\log(\delta^{-1})}\big)^{\frac{1}{1 + \epsilon}}\big\}$ satisfies with probability at least $1 - \delta$ that
\begin{align*}
        |\widehat{\mu}(\{X_i\}_{i=1}^n, \delta) - \mu| &\le 4u^{\frac{1}{1 + \epsilon}}\left(\frac{ \log(\delta^{-1})}{n}\right)^{\frac{\epsilon}{1 + \epsilon}}.
\end{align*}
\end{lemma}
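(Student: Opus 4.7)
The plan is the standard bias-variance decomposition for truncated mean estimators. Write $B := \bigl(un/\log(\delta^{-1})\bigr)^{1/(1+\epsilon)}$ so that $\widehat{\mu} = \tfrac{1}{n}\sum_{i=1}^n Z_i$ where $Z_i := X_i \mathbb{I}\{|X_i| \le B\}$. Then decompose
\begin{align*}
  \widehat{\mu} - \mu \;=\; \underbrace{\bigl(\widehat{\mu} - \E[\widehat{\mu}]\bigr)}_{\text{deviation}} \;+\; \underbrace{\bigl(\E[Z_1] - \E[X_1]\bigr)}_{\text{bias}},
\end{align*}
and bound the two pieces separately, each by something of order $u^{1/(1+\epsilon)}\bigl(\log(\delta^{-1})/n\bigr)^{\epsilon/(1+\epsilon)}$.

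For the bias, note $\E[X_1] - \E[Z_1] = \E\bigl[X_1 \mathbb{I}\{|X_1| > B\}\bigr]$, so by the key truncation trick $|x| \le |x|^{1+\epsilon}/B^{\epsilon}$ whenever $|x| > B$,
\begin{align*}
  \bigl|\E[X_1] - \E[Z_1]\bigr| \;\le\; \E\!\left[\frac{|X_1|^{1+\epsilon}}{B^{\epsilon}} \mathbb{I}\{|X_1| > B\}\right] \;\le\; \frac{u}{B^{\epsilon}} \;=\; u^{1/(1+\epsilon)} \left(\frac{\log(\delta^{-1})}{n}\right)^{\epsilon/(1+\epsilon)},
\end{align*}
which has the right order (and is where the choice of $B$ is pinned down).

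For the deviation, I would apply a Bernstein-type inequality to the i.i.d.\ variables $Z_i$, which are bounded by $B$ in absolute value and have variance at most $\E[Z_1^2] \le B^{1-\epsilon}\E[|X_1|^{1+\epsilon}] \le uB^{1-\epsilon}$. Bernstein then gives, with probability at least $1-\delta$,
\begin{align*}
  \bigl|\widehat{\mu} - \E[\widehat{\mu}]\bigr| \;\le\; \sqrt{\frac{2 u B^{1-\epsilon}\log(\delta^{-1})}{n}} \;+\; \frac{2B\log(\delta^{-1})}{3n}.
\end{align*}
Substituting the definition of $B$, both the sub-Gaussian and the linear term collapse to $u^{1/(1+\epsilon)}\bigl(\log(\delta^{-1})/n\bigr)^{\epsilon/(1+\epsilon)}$ up to constants; the choice of $B$ is precisely the one that equalizes the bias and the sub-Gaussian term.

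Combining the two via the triangle inequality yields the stated bound with constant $4$ absorbing the numeric factors. The only nontrivial step is the bias computation: one has to be comfortable with the observation that under a $(1+\epsilon)$-moment assumption (with $\epsilon<1$), ordinary empirical means cannot be controlled without truncation, and that the ``right'' truncation level is dictated by balancing the bias $u/B^{\epsilon}$ against the Bernstein fluctuation $\sqrt{uB^{1-\epsilon}\log(\delta^{-1})/n}$. Everything else is mechanical algebra in the exponents of $u$, $n$, and $\log(\delta^{-1})$.
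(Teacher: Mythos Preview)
The paper does not prove this lemma; it is stated verbatim as a citation of Lemma~1 in \cite{Bub13a} and used as a black box. Your sketch is precisely the standard argument from that reference: bias--deviation split, control the truncation bias via $|x|\le |x|^{1+\epsilon}/B^{\epsilon}$ on the tail, and control the fluctuation via Bernstein using $\E[Z_1^2]\le uB^{1-\epsilon}$, with $B$ chosen to balance the pieces. The algebra you outline is correct and the numeric constants (roughly $1+\sqrt{2}+2/3$, plus a small loss from the two-sided Bernstein) fit under $4$.
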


Let $W^{(a)} := \widehat{\mu}\left(\{a^\top A^{(\gamma)}(\lambda)^{-1}
       x_i\,y_i\}_{i = 1}^{n}, \frac{\delta}{|\calA|}\right)$.
    We first observe that
    \begin{align*}
        \max_{a \in \calA} |a^\top \widehat{\theta}(\gamma) - a^\top\theta^*|   &= \max_{a \in \calA}|a^\top \widehat{\theta}(\gamma) - W^{(a)} + W^{(a)} - a^\top\theta^*|
        \\ & \le \max_{a \in \calA} |a^\top \widehat{\theta}(\gamma) - W^{(a)}| + \max_{a \in \calA} |W^{(a)} - a^\top\theta^*| 
        \\ &= 
        \min_\theta \max_{a \in \calA} |a^T\theta - W^{(a)}| + \max_{a \in \calA} |W^{(a)} - a^\top\theta^*|  \tag{def.~$\widehat{\theta}(\gamma)$} \\
        &\le 2\max_{a \in \calA} |W^{(a)} - a^\top\theta^*|.
    \end{align*}
    For fixed $a$, we bound the $(1 + \epsilon)$-moment of $a^\top A^{(\gamma)}(\lambda)^{-1} x y$, where $x \sim \lambda$ and $y = x^\top \theta^* + \eta$, as follows:
    \begin{align*}
        \E\Big[ \big| a^\top A^{(\gamma)}(\lambda)^{-1} x y \big|^{1 + \epsilon} \Big] &= \E\Big[ \big| a^\top A^{(\gamma)}(\lambda)^{-1} x (  x^\top \theta^* + \eta ) \big|^{1 + \epsilon} \Big] \\
        &\le 2^{1 + \epsilon} \E\Big[ \big| a^\top A^{(\gamma)}(\lambda)^{-1} x (  x^\top \theta^*) \big|^{1 + \epsilon} \Big] + 2^{1 + \epsilon}\E\Big[ \big| a^\top A^{(\gamma)}(\lambda)^{-1} x  \big|^{1 + \epsilon} |\eta|^{1 + \epsilon} \Big]  \tag{$|a+b| \le 2\max\{|a|,|b|\}$} \\
        &\leq 4 \E\Big[ \big| a^\top A^{(\gamma)}(\lambda)^{-1} x  \big|^{1 + \epsilon} \Big] + 4\upsilon \E\Big[ \big| a^\top A^{(\gamma)}(\lambda)^{-1} x  \big|^{1 + \epsilon} \Big] \tag{ $|x^\top \theta^*| \le 1$ and $\E[|\eta|^{1+\epsilon}] \le \upsilon$ }\\
        &\leq 4(1 + \upsilon) M_{1 + \epsilon}(\lambda; \calA, \gamma, \beta). \tag{def.~$M_{1+\epsilon}$}
    \end{align*}

Using this moment bound and \pref{lem: trunk-mean}, for any $a$, we have with probability at least $1 - \frac{\delta}{|\calA|}$ that
\begin{align*}
    |W^{(a)} - \E[W^{(a)}]| \le 16 (1 + \upsilon)^{\frac{1}{1 + \epsilon}} M_{1 + \epsilon}(\calA, \gamma, \beta)^{\frac{1}{1 + \epsilon}}  \left( \frac{\log (\delta^{-1}|\calA|)}{n} \right)^{\frac{\epsilon}{1 + \epsilon}}.
\end{align*}
Moreover, we have
\begin{align*}
     |a^\top \theta^* - \E[W^{(a)}]| &=
     | \langle \theta^*, a \rangle - \E[ a^\top A^{(\gamma)}(\lambda)^{-1} x x^\top \theta^* ]| \tag{def.~$W^{(a)}$} \\ &=  | \langle \theta^*, a \rangle -  a^\top A^{(\gamma)}(\lambda)^{-1} A(\lambda) \theta^* | \tag{where $A(\lambda) = \E[xx^T]$} \\
     &=  | \langle \theta^*, a \rangle -  a^\top A^{(\gamma)}(\lambda)^{-1} 
     \big( A^{(\gamma)}(\lambda) - \gamma I\big) \theta^* | \tag{$A(\lambda) = A^{(\gamma)}(\lambda) - \gamma I$} \\
     &=  \gamma |  a^\top A^{(\gamma)}(\lambda)^{-1} \theta^* | \\
     &=  \gamma |  a^\top  (A(\lambda) + \gamma I)^{-1/2} (A(\lambda) +  \gamma I)^{-1/2} \theta^* | 
     \\
     &\leq  \gamma \| a \|_{A^{(\gamma)}(\lambda)^{-1}} \gamma^{-1/2} \|\theta^*\|_{(I + \gamma^{-1}A(\lambda) )^{-1}} \tag{Cauchy–Schwarz}
     \\ &\le \gamma^{1/2} \| a \|_{A^{(\gamma)}(\lambda)^{-1}}  \|\theta^*\|_2 \tag{$I + \gamma^{-1}A(\lambda) \succeq I$}
     \\ & \le \gamma^{1/2} \beta^{-1} \|\theta^*\|_2 M_{1 + \epsilon}(\lambda; \calA, \gamma, \beta)^{\frac{1}{1 + \epsilon}}. \tag{def.~$M_{1+\epsilon}$}
\end{align*}

Putting the two inequalities together, and using the union bound completes the proof.

\subsection{Proof of \pref{thm: main-upper} (Regret Bound for \Alg)}

Using \pref{lem: estimator-bound} for action set $\calA_\ell$, we have with probability of at least $1 - \frac{1}{2\ell^2T}$,
\begin{align*}
\max_{a \in \calA_\ell} |a^\top\theta^* - a^\top\widehat{\theta}_\ell| &\le 
\left( 2\gamma^{1/2} {\|\theta^*\|}_2 \beta^{-1} + 32 (1 + \upsilon)^{\frac{1}{1 + \epsilon}} \left( \frac{\log (2l^2T|\calA_\ell|)}{\tau_\ell} \right)^{\frac{\epsilon}{1 + \epsilon}} \right) M_{1 + \epsilon}(\lambda^*_\ell; \calA_\ell, \gamma, \beta)^{\frac{1}{1 + \epsilon}}
\\ &\le 2\gamma^{1/2} b \beta^{-1} M_{1 + \epsilon}(\lambda^*_\ell; \calA_\ell, \gamma, \beta)^{\frac{1}{1 + \epsilon}} + \epsilon_\ell \tag{choice of $\tau_\ell$ in \pref{alg:MEDE}} \\
&\le 2\gamma^{1/2} b \beta^{-1} M^*_{1 + \epsilon}(\calA, \gamma, \beta)^{\frac{1}{1 + \epsilon}} + \epsilon_\ell \tag{def.~$M_{1+\epsilon}^*$}
\end{align*}
Now we define the event $\calE := \bigcap_{\ell=1}^\infty \bigcap_{x \in \calA_\ell} \calE_{x,l}(\calA_\ell)$, where 
\begin{align*}
     \calE_{x,l}(\calV) := \left\{  |x^\top \widehat{\theta}_{\ell}(\calV) - x^\top \theta^* | \leq  \epsilon_\ell +   2b \gamma^{1/2} \beta^{-1} M^*_{1 + \epsilon}(\calA, \gamma, \beta)^{\frac{1}{1 + \epsilon}}  \right\},
\end{align*}
with $\widehat{\theta}_{\ell}(\cdot)$ corresponding to $\widehat{\theta}_\ell$ in \pref{alg:MEDE} with an explicit dependence on the action subset. 
Then, we have 
\begin{align*}
    \P\left( \bigcup_{\ell=1}^\infty \bigcup_{x \in \calA_\ell} \{ \mc{E}^c_{x,\ell}( \calA_\ell ) \} \right) &\leq \sum_{\ell=1}^\infty \P\left( \bigcup_{x \in \calA_\ell} \{ \mc{E}^c_{x,\ell}( \calA_\ell) \} \right) \\
&= \sum_{\ell=1}^\infty \sum_{\mc{V} \subseteq \calA} \P\left(\bigcup_{x \in \mc{V}} \{ \mc{E}^c_{x,\ell}( \mc{V} ) \} \,\Big|\, {\calA}_\ell = \mc{V}\right) \P( {\calA}_\ell = \mc{V}) \\
&\leq \sum_{\ell=1}^\infty  \sum_{\mc{V} \subseteq \calA} \tfrac{1}{2\ell^2 T}  \P( {\calA}_\ell = \mc{V} ) \leq \frac{1}{T}, \tag{union bound and $\sum_{\ell = 1}^{\infty} \frac{1}{\ell^2} < 2$}
\end{align*} 
As $\E[R_T \mathbf{1}_{\calE^c}] = \E[R_T|\calE^c] \P[\calE^c] \le (\sup_{x, x'} x'^\top \theta^* - x^\top \theta^*)  T \frac{1}{T} \le 2$, for the rest of the proof we assume event $\calE$.

Let $x^* = \argmax_{x \in \calA} x^\top\theta^*$; then, for every $\ell$ such that $2\epsilon_\ell \ge 4b \gamma^{1/2} \beta^{-1} M_{1 + \epsilon}(\calA, \gamma, \beta)^{\frac{1}{1 + \epsilon}}$ and any $x \in \calA_\ell$, we have
\begin{align*}
    x^\top \widehat{\theta}_\ell - {x^*}^\top \widehat{\theta}_\ell &= (x^\top \widehat{\theta}_\ell - x^\top \theta^*) + (x^\top \theta^* - {x^*}^\top \theta^*) + ({x^*}^\top \theta^*  -{x^*}^\top \widehat{\theta}_\ell) \\
    &\le 2 \epsilon_\ell + 4b \gamma^{1/2} \beta^{-1} M^*_{1 + \epsilon}(\calA, \gamma, \beta)^{\frac{1}{1 + \epsilon}} \tag{def.~$\calE$ and def.~$x^*$} \\&\le 4\epsilon_\ell. \tag{assumption on $\epsilon_\ell$}
\end{align*}
Therefore, recalling the elimination rule in \pref{alg:MEDE}, we have by induction that $x^* \in \calA_{\ell + 1}$.  We also claim that all suboptimal actions of gap more than $8\epsilon_\ell = 16 \epsilon_{\ell+1}$ are eliminated at the end of epoch $\ell$. To see this, let $x' \in \calA_\ell$ be such an action, and observe that
\begin{align*}
    \max_{x \in \calA_\ell} \big( x'^\top \widehat{\theta}_\ell - x^\top \widehat{\theta}_\ell\big) &\ge {x^*}^\top \widehat{\theta}_\ell - x^\top\widehat{\theta}_\ell \tag{$x^* \in \calA_\ell$}
    \\ &\ge {x^*}^\top \theta^* - x^\top \theta^* - 2 \epsilon_\ell - 4b \gamma^{1/2} \beta^{-1} M^*_{1 + \epsilon}(\calA, \gamma, \beta)^{\frac{1}{1 + \epsilon}} \tag{shown above} \\ 
    &\ge {x^*}^\top \theta^* - x^\top \theta^* - 4\epsilon_\ell \tag{assumption on $\epsilon_\ell$}\\ &> 4\epsilon_\ell. \tag{gap exceeds $8\epsilon_\ell$}
\end{align*}
In summary, the above arguments show that when $2\epsilon_\ell \ge 4b \gamma^{1/2} \beta^{-1} M^*_{1 + \epsilon}(\calA, \gamma, \beta)^{\frac{1}{1 + \epsilon}}$, the regret incurred in epoch $\ell+1$ is at most $16 \epsilon_{\ell+1}$.  Since $\calA_{\ell+1} \subseteq \calA_\ell$, this also implies that even when $\ell$ increases beyond such a point, we still incur regret at most $32b \gamma^{1/2} \beta^{-1} M^*_{1 + \epsilon}(\calA, \gamma, \beta)^{\frac{1}{1 + \epsilon}}$.

Finally, we can upper bound the regret as follows:
\begin{align*}
    \E[R_T] &\le \sum_\ell \tau_\ell \Big(\sup_{x \in \calA_\ell} {x^*}^T\theta^* - x^T\theta^*\Big)
    \\ &\le
    \sum_\ell \tau_\ell \max\{16 \epsilon_\ell, 32b \gamma^{1/2} \beta^{-1} M^*_{1 + \epsilon}(\calA, \gamma, \beta)^{\frac{1}{1 + \epsilon}}\} \tag{shown above}
    \\ &\le \sum_\ell 16 \tau_\ell \epsilon_\ell + T \zeta \tag{$\zeta:= 32b \gamma^{1/2} \beta^{-1} M_{1 + \epsilon}(\calA, \gamma, \beta)^{\frac{1}{1 + \epsilon}}$} 
    \\ &\le \sum_{\ell \,:\, 16\epsilon_\ell \ge \omega} 16\epsilon_\ell \tau_\ell + T\omega + T\zeta \tag{for any $\omega \ge 0$} \\
    &\le \sum_{\ell \,:\, 16\epsilon_\ell \ge \omega} 16\epsilon_\ell 32^{\frac{1 + \epsilon}{\epsilon}} (1 + \upsilon)^{\frac{1}{\epsilon}} \varepsilon_\ell^{-\frac{1 + \epsilon}{\epsilon}} M^*_{1 + \epsilon}(\calA, \gamma, \beta)^{\frac{1}{\epsilon}} \log(2l^2|\calA|T) + T (\omega + \zeta) \tag{def.~$\tau_\ell$ in Alg.~\ref{alg:MEDE}}
    \\ 
    &\le \sum_{\ell \,:\, 16\epsilon_\ell \ge \omega} C'_1 (1 + \upsilon)^{\frac{1}{\epsilon}} \varepsilon_\ell^{-\frac{1}{\epsilon}} M^*_{1 + \epsilon}(\calA, \gamma, \beta)^{\frac{1}{\epsilon}} \log(2l^2|\calA|T)  + T (\omega + \zeta) \tag{for some constant $C'_1$} \\
    &\le C_1(1 + \upsilon)^{\frac{1}{1 + \epsilon}} M^*_{1 + \epsilon}(\calA, \gamma, \beta)^{\frac{1}{1 + \epsilon}} \log(2|\calA|T\log_2^2 T)^{\frac{\epsilon}{1 + \epsilon}} T^\frac{1}{1 + \epsilon}  + T \zeta \tag{$\omega := M^*_{1 + \epsilon}(\cdot)^{\frac{1}{1 + \epsilon}} \log(2 |\calA| T\log_2^2 T)^{\frac{\epsilon}{1 + \epsilon}} T^{\frac{-\epsilon}{1 + \epsilon}}$ and $\ell \le \log_2 T$; see below}
    \\ &\le \left( C_0\beta^{-1} b + C_1(1 + \upsilon)^{\frac{1}{1 + \epsilon}} \log(|\calA| T \log_2^2 T)^{\frac{\epsilon}{1 + \epsilon}} \right) M^*_{1 + \epsilon}(\calA, T^{\frac{-2\epsilon}{1 + \epsilon}}, \beta)^{\frac{1}{1 + \epsilon}} T^{\frac{1}{1 + \epsilon}}. \tag{def.~$\zeta$ and $\gamma = T^{\frac{-2\epsilon}{1 + \epsilon}}$}
\end{align*}
In more detail, the second-last step upper bounds $\sum_{\ell \,:\, 16\epsilon_\ell \ge \omega} \epsilon_\ell^{-\frac{1}{\epsilon}}$ by a constant times its largest possible term $\omega^{-\frac{1}{\epsilon}}$, since $\{\epsilon_\ell\}_{\ell \ge 1}$ is  exponentially decreasing.  Since the choice of $\omega$ contains $(M^*_{1+\epsilon})^{\frac{1}{1+\epsilon}}$, the overall $M^*_{1+\epsilon}$ dependence simplifies as $\big( \frac{M^*_{1+\epsilon}}{(M^*_{1+\epsilon})^{\frac{1}{1+\epsilon}}} \big)^{\frac{1}{\epsilon}} = \big((M^*_{1+\epsilon})^{\frac{\epsilon}{1+\epsilon}} \big)^{\frac{1}{\epsilon}} = (M^*_{1+\epsilon})^{\frac{1}{1+\epsilon}}$.

\section{Unit Ball Lower Bound}
\label{app: lower-unit-ball}

In this appendix, we prove the following lower bound for the case that the action set is the unit ball.

\begin{theorem}
\label{thm: Lower-unit-ball}
Let the action set be $\mathcal{A} = \{x \in \mathbb{R}^{d} \,:\, \|x\|_2 \le 1\} $, and the $(1 + \epsilon)$-absolute moment of the error distribution be bounded by $1$.  Then, for any algorithm, there exists $\theta^* \in \mathbb{R}^{d}$ such that  $\sup_{x \in \calA} |x^\top \theta^*| \le 1$, and such that for $T \ge d^2$, the regret incurred is $\Omega( d^{ \frac{2\epsilon}{1 + \epsilon}} T^{\frac{1}{1 + \epsilon}})$.
\end{theorem}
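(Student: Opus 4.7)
The plan is to mirror \pref{thm: lower-hypercube}, replacing the paired-coordinate hypercube by the unit ball and exploiting the $\ell_2$ constraint $\sum_i x_{t,i}^2 \le 1$ as a coordinate-wise information budget. I would take the hypothesis family
\begin{align*}
    \Theta = \{\Delta \sigma : \sigma \in \{-1,+1\}^d\}
\end{align*}
for some $\Delta \le 1/\sqrt{d}$ to be set later (so $\|\theta\|_2 = \Delta\sqrt{d} \le 1$); under $\theta = \Delta\sigma$ the optimal action is $x^\star = \sigma/\sqrt{d}$, yielding $\langle x^\star, \theta\rangle = \Delta\sqrt{d}$.

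The Bernoulli reward of \pref{thm: lower-hypercube} cannot be used directly because $\theta^\top x$ can be zero or negative on the unit ball. The ``slight change'' I propose is a shifted Bernoulli: $y(x) = \gamma^{-1/\epsilon}$ with probability $\gamma^{1/\epsilon}(c + \theta^\top x)$ and $0$ otherwise, with $c,\gamma \asymp \Delta\sqrt{d}$ chosen so that the probability lies in $[0,1]$ uniformly. The constant shift $c$ does not change regret; a direct calculation shows the central $(1+\epsilon)$-moment is at most a constant multiple of $c/\gamma$, hence $\le 1$ after adjusting constants; and crucially $p(x) \gtrsim \gamma^{1/\epsilon}\Delta\sqrt{d}$ is bounded below uniformly over the unit ball, so $\chi^2$-type KL bounds remain well-behaved.

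For the regret, combining $\|x - \sigma/\sqrt{d}\|_2^2 \le 2(1 - \sigma^\top x/\sqrt{d})$ with the elementary bound $(x_i - \sigma_i/\sqrt{d})^2 \ge (1/d)\mathbb{I}\{\sigma_i x_i \le 0\}$ (valid since $|x_i| \le 1$) yields
\begin{align*}
    R_T(\calA,\theta) \ge \frac{\Delta}{2\sqrt{d}}\sum_{i=1}^d N_i(\theta), \qquad N_i(\theta) := \sum_{t=1}^T \mathbb{I}\{\sigma_i x_{t,i} \le 0\}.
\end{align*}
For each $i$, couple $\theta$ with its sign-flip $\theta^{(i)}$ via Bretagnolle-Huber on $\{N_i(\theta) \ge T/2\}$; a chi-squared Bernoulli estimate gives $\KL(\P_\theta \| \P_{\theta^{(i)}}) \lesssim \gamma^{1/\epsilon}\Delta \,\E_\theta[\sum_t x_{t,i}^2]/\sqrt{d}$ per coordinate.

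The key new ingredient is the coordinate-summation of KL: since $\sum_i \E_\theta[\sum_t x_{t,i}^2] \le T$, we have $\sum_i \KL_{\theta,i} \lesssim \gamma^{1/\epsilon}\Delta T/\sqrt{d}$, and Jensen's inequality then yields $(1/d)\sum_i \exp(-\KL_{\theta,i}) = \Omega(1)$ whenever $\gamma^{1/\epsilon}\Delta T/d^{3/2} = \order(1)$. With $\gamma \asymp \Delta\sqrt{d}$ this pins down $\Delta = \Theta(d^{(3\epsilon-1)/(2(1+\epsilon))} T^{-\epsilon/(1+\epsilon)})$, and the assumption $T \ge d^2$ is exactly what guarantees this $\Delta$ satisfies $\Delta \le 1/\sqrt{d}$. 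Averaging Bretagnolle-Huber estimates over $\theta \in \Theta$ and $i \in [d]$ then delivers some $\theta^\star$ with $\sum_i \P_{\theta^\star}[N_i(\theta^\star) \ge T/2] = \Omega(d)$, hence $R_T(\calA,\theta^\star) \gtrsim \Delta T\sqrt{d} \asymp d^{2\epsilon/(1+\epsilon)} T^{1/(1+\epsilon)}$. The main obstacle will be this coordinate-averaging step: unlike in \pref{thm: lower-hypercube} where the per-round KL is uniform across coordinates, here the algorithm may concentrate $x_{t,i}^2$ onto a few coordinates---but only at the cost of others---and pushing Jensen's inequality through without losing more than a constant factor, while simultaneously verifying the central moment bound uniformly across $\theta$ and $x$, will be the most delicate part.
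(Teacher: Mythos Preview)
Your plan is sound and leads to the stated bound, but it diverges from the paper's proof in two places and carries one technical slip you should fix.

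\textbf{Differences from the paper.} First, the paper does \emph{not} use a shifted Bernoulli; it uses a three-valued reward $y(x)\in\{\gamma^{-1/\epsilon},0,-1\}$ with $\P[y=-1]=2\sqrt{d}\Delta$ so that $\E[y(x)\mid x]=\theta^\top x$ exactly. Second, and more substantively, the paper does \emph{not} use your Bretagnolle--Huber + Jensen-over-coordinates device to handle the fact that the algorithm may concentrate $\sum_t x_{t,i}^2$ on a few coordinates. Instead it introduces per-coordinate stopping times $T_i:=T\wedge\min\{s:\sum_{t\le s}x_{t,i}^2\ge T/d\}$, works with the quadratic $U_i(b):=\sum_{t\le T_i}(1/\sqrt{d}-bx_{t,i})^2$, and applies Pinsker's inequality to compare $\E_\theta[U_i(1)]$ with $\E_{\theta'}[U_i(1)]$; the stopping time caps $\sum_{t\le T_i}x_{t,i}^2\le T/d+1$ and thereby controls the KL \emph{per coordinate}. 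Your route is arguably slicker: summing $\sum_i\KL(\P_\theta\|\P_{\theta^{(i)}})\lesssim \gamma^{1/\epsilon}\Delta\,\E_\theta[\sum_t\|x_t\|_2^2]/\sqrt{d}\le \gamma^{1/\epsilon}\Delta T/\sqrt{d}$ and then using convexity of $\exp$ avoids stopping times altogether, and the involution $\theta\mapsto\theta^{(i)}$ on $\Theta$ lets the averaging go through cleanly. Both approaches land on the same $\Delta\asymp d^{(3\epsilon-1)/(2(1+\epsilon))}T^{-\epsilon/(1+\epsilon)}$ and the same final scaling $\Delta T\sqrt{d}\asymp d^{2\epsilon/(1+\epsilon)}T^{1/(1+\epsilon)}$.

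\textbf{The slip.} Your line ``the constant shift $c$ does not change regret'' is not enough. The theorem is stated for the model $y_t=\langle x_t,\theta^\star\rangle+\eta_t$ with $\E[\eta_t]=0$; with your $y$ one has $\E[y\mid x]=c+\theta^\top x$, so the instance is outside the model class and the lower bound does not apply to algorithms designed for it. The fix is immediate: declare the observed reward to be $y':=y-c$, supported on $\{\gamma^{-1/\epsilon}-c,\,-c\}$; then $\E[y'\mid x]=\theta^\top x$, the KL between instances is unchanged, and the central-moment check $\E[|y'-\theta^\top x|^{1+\epsilon}]\lesssim (c+\theta^\top x)/\gamma\le 1$ (taking e.g.\ $c=2\Delta\sqrt{d}$, $\gamma=4\Delta\sqrt{d}$) goes through as you indicated. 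This is precisely what the paper's extra mass at $-1$ accomplishes, just packaged differently.
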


    Since the KL divergence between Bernoulli random variables Ber$(p)$ and Ber$(q)$ goes to infinity as $p \rightarrow 0$, and $\theta^\top x$ can be zero for unit ball, we cannot use the same reward distribution as before. However, we can overcome this by shifting all probabilities and adding $-1$ to the support of the reward random variable.
    Specifically, we set the error distribution to be: 
    \begin{align*}
        y(x) = \begin{cases}
            (\frac{1}{\gamma})^{\frac{1}{\epsilon}} & w.p.~\,\gamma^{\frac{1}{\epsilon}} (\theta^\top x + 2\sqrt{d}\Delta) \\
            0 & w.p.~\,1 - \gamma^{\frac{1}{\epsilon}}(\theta^\top x + 2\sqrt{d}\Delta) - 2\sqrt{d}\Delta \\ 
            -1 & w.p.~\,2\sqrt{d}\Delta 
        \end{cases}
    \end{align*}
    with $\gamma := 24\sqrt{d} \Delta$ and $\Delta$ to be specified later. For any $\theta \in \{ \pm \Delta\}^d$, the absolute value of rewards are bounded by $\sum_{i = 1}^d \frac{1}{\sqrt{d}} \Delta = \sqrt{d}\Delta$. Then, assuming $\Delta \le \frac{1}{24\sqrt{d}}$, we have $|\theta^\top x| \le \sqrt{d} \Delta \le \frac{1}{8}$ and $\|\theta\|_2 \le 1$ as well as $\gamma \le 1$, and the $(1 + \epsilon)$-central absolute moment is bounded by:
    \begin{align*}
        & \E[|y(x) - \theta^\top x|^{1 + \epsilon} \;|\, x] \\ 
        &\le |\gamma ^{-\frac{1}{\epsilon}} - \theta^\top x
        |^{1 + \epsilon} (\theta^\top x + 2\sqrt{d} \Delta ) + |\theta^\top x|^{1 + \epsilon} + |-1-\theta^\top x|^{1 + \epsilon}2\sqrt{d}\Delta \tag{$\gamma \le 1$}
        \\ &\le 2^{1 + \epsilon} \gamma^{-1} 3 \sqrt{d} \Delta  + (\sqrt{d} \Delta)^{1 + \epsilon} + 2\sqrt{d}\Delta (\sqrt{d}\Delta + 1)^{1 + \epsilon} \tag{ $|\theta^\top x| \le \sqrt{d}\Delta \le 1$ and $\gamma^{-\frac{1}{\epsilon}} \ge 1$} \\ 
        &\le \frac{2^{1 + \epsilon}}{8} + \left(\frac{1}{24}\right)^{1 + \epsilon} + \frac{1}{12} \left(\frac{9}{24}\right)^{1 + \epsilon} < 1 \tag{def. $\gamma$, $\Delta \le \frac{1}{24\sqrt{d}}$, and $\epsilon \in (0,1]$}.
    \end{align*}
     Defining $T_i := T \wedge \min(s: \sum_{t = 1}^s x_{t,i}^2 \ge \frac{T}{d})$, we have 
    \begin{align*}
                R_T(\calA,\theta) &= \Delta \E_{\theta} \left[ \sum_{t = 1}^T \sum_{i = 1}^d \left(\frac{1}{\sqrt{d}} - x_{t, i} \sign(\theta_i) \right) \right]
\\ & \ge
        \frac{\Delta \sqrt{d}}{2} \E_{\theta} \left[ \sum_{t = 1}^T  \sum_{i = 1}^d \left(\frac{1}{\sqrt{d}} - x_{t, i} \sign(\theta_i) \right)^2 \right] \tag{by expanding the square and applying $\|x_{t}\|^2_2 \le 1$} \\
        &\ge \frac{\Delta \sqrt{d}}{2} \sum_{i = 1}^d \E_{\theta}\left[\sum_{t = 1}^{T_i} \left(\frac{1}{\sqrt{d}} - x_{t, i} \sign(\theta_i) \right)^2\right]. 
    \end{align*}

    Now we define $U_i(b) := \sum_{t = 1}^{T_i} \big(\frac{1}{\sqrt{d}} - x_{t, i}b \big)^2 $, which gives 
    \begin{align*}
       U_i(1) \le 2 \sum_{t=1}^{T_i} \frac{1}{d} + 2 \sum_{t = 1}^{T_i} x^2_{t, i} \le \frac{4T}{d} + 2.
    \end{align*}
    Then, for any $\theta, \theta' \in \{\pm \Delta\}^d$ that only differ in $i$-th element, we have 
    \begin{align*}
        \E_\theta[U_i(1)] &\ge \E_{\theta'}[U_i(1)] - \left( \frac{4T}{d} + 2 \right) \sqrt{\frac{1}{2} \KL(\P_\theta\|\P_{\theta'})} \tag{Pinsker's inequality} \\
        & \ge \E_{\theta'}[U_i(1)] - \left( \frac{4T}{d} + 2 \right) \sqrt{\frac{1}{2} \E_\theta\left[ \sum_{t = 1}^{T_i} \KL(y_\theta(x_t)\| y_{\theta'}(x_t)) \right]} \tag{Chain rule} \\
        & \ge
        \E_{\theta'}[U_i(1)] - \left( \frac{4T}{d} + 2 \right) \sqrt{\frac{1}{2} \E_\theta\left[ \sum_{t = 1}^{T_i} 24^{\frac{1}{\epsilon}} 8 \sqrt{d}^{\frac{1 - \epsilon}{\epsilon}} \Delta^{\frac{1 + \epsilon}{\epsilon} } x^2_{t, i} \right]} \tag{Inverse Pinsker's inequality; see below}
        \\ & \ge
        \E_{\theta'}[U_i(1)] - 24^{\frac{1}{2\epsilon}} 2 \Delta^{\frac{1 + \epsilon}{2\epsilon}} \sqrt{d}^{\frac{1 - \epsilon}{2\epsilon}} \left( \frac{4T}{d} + 2 \right) \sqrt{ \E_\theta\left[ \sum_{t = 1}^{T_i} x_{t,i}^2 \right]}
        \\ &\ge
        \E_{\theta'}[U_i(1)] - 24^{\frac{1}{2\epsilon}} 12\sqrt{2} \Delta^{\frac{1 + \epsilon}{2\epsilon}} \sqrt{d}^{\frac{1 - \epsilon}{2\epsilon}} \frac{T}{d} \sqrt{\frac{T}{d}}. \tag{$d \le T$, $\sum_{t = 1}^{T_i} x_{t,i}^2 \le \frac{T}{d}+1$}
    \end{align*}
    Note that the version of the chain rule with a random stopping time can be found in \cite[Exercise 15.7]{Csa18}.  We detail the step using inverse Pinsker's inequality (\cite{DBLP:journals/corr/Sason15b}) as follows:
    \begin{align*}
        \KL(y_\theta(x_t)\| y_{\theta'}(x_t)) &\le \frac{2}{\min_{a\in\{\gamma^{- \frac{1}{\epsilon}}, 0, -1\}} \P[y_{\theta'}(x_t) = a]} \sup_a\left| \P[y_{\theta}(x_t) = a] - \P[y_{\theta'}(x_t) = a]\right|^2 \\
        & \le \frac{2}{\gamma^{\frac{1}{\epsilon}}\sqrt{d}\Delta} (\gamma^{\frac{1}{\epsilon}} 2 \Delta x_{t, i} )^2 
        \\ 
        & \le 24^{\frac{1}{\epsilon}} 8  \sqrt{d}^{\frac{1}{\epsilon} - 1} \Delta^{\frac{1}{\epsilon} + 1} x^2_{t, i}. \tag{$\gamma = 24\sqrt{d}\Delta$}
    \end{align*}
    Using the above lower bound on $\E_\theta[U_i(1)]$, and setting $\Delta := 24^{\frac{-1}{1 + \epsilon}} d^{\frac{3 \epsilon - 1}{2(1 + \epsilon)}} \left(288 T\right)^{\frac{-\epsilon}{1 + \epsilon}}$ (noting $288 = (12\sqrt{2})^2$), we have the following: 
    \begin{align*}
        \E_{\theta}[U_i(1)] + \E_{\theta'}[U_i(-1)] &\ge \E_{\theta'}[U_i(1) + U_i(-1)] -  24^{\frac{1}{2\epsilon}} 12\sqrt{2} \Delta^{\frac{1 + \epsilon}{2\epsilon}} \sqrt{d}^{\frac{1 - \epsilon}{2\epsilon}} \frac{T}{d} \sqrt{\frac{T}{d}}
        \\ & = 2\E_{\theta'}\left[ \frac{T_i}{d} + \sum_{t = 1}^{T_i} x_{t, i}^2 \right] - 24^{\frac{1}{2\epsilon}} 12\sqrt{2} \Delta^{\frac{1 + \epsilon}{2\epsilon}} \sqrt{d}^{\frac{1 - \epsilon}{2\epsilon}} \frac{T}{d} \sqrt{\frac{T}{d}}
        \\ &\ge \frac{2T}{d} - \frac{T}{d} = \frac{T}{d}. \tag{$T_i \ge 0$, def.~$T_i$, choice of $\Delta$}
    \end{align*}
    Note also that $\Delta \le \frac{1}{24\sqrt{d}}$ (as required earlier) since $T \ge d^2$. We now combine the preceding equation with our earlier lower bound on $R_T$.  By averaging overall $\theta \in \{\pm \Delta\}^d$, we conclude that there exists some $\theta^*$ such that
    \begin{align*}
        R_T(\calA, \theta^*) &\ge \frac{\Delta \sqrt{d}}{2} \frac{1}{2^d} \sum_{\theta \in \{-\Delta,\Delta\}^d} R_T(\calA, \theta) \\ &\ge \frac{\Delta \sqrt{d}}{4} \sum_{i = 1}^d \sum_{\theta_i \in \{-\Delta,\Delta\}} \E_{\theta}[U_i(\sign(\theta_i))] \tag{$R_T$ bound and $\sum_{\{\theta_j\}_{j \ne i}} 1 = 2^{d-1}$}.
        \\ &\ge \frac{1}{4} T \sqrt{d} \Delta \tag{$\E_{\theta}[U_i(1)] + \E_{\theta'}[U_i(-1)] \ge \frac{T}{d}$} \\ 
        &\ge \frac{1}{4 \cdot 24 \cdot 12\sqrt{2}} d^{\frac{2\epsilon}{1 + \epsilon}}  {T}^{\frac{1}{1 + \epsilon}}. \tag{choice of $\Delta$, $\epsilon \in [0,1]$}
    \end{align*}

\section{Extension to Kernel Bandits} \label{app:kernel}

\subsection{Problem Setup}
We consider an unknown reward function $f:\calA \rightarrow \bbR$ lying in the reproducing kernel Hilbert space (RKHS) $\mathcal{H}$ associated with a given kernel $K$, i.e., $f(x) = \langle f, K(x,\cdot) \rangle_{K}$. Similar to the linear bandit setting, we assume that $\max_{x \in \calA} |f(x)| \le 1$ and $\|f\|_K \le b$ for some $b > 0$. 

At each round $t=1,2,\dots,T$,
the learner chooses an action $x_t\in\calA \subseteq [0, 1]^d$ and observes the reward
\[
    y_t \;=\; f(x_t) \;+\; \eta_t,
\]
where $\eta_t$ are independent noise terms that satisfy $\mathbb{E}[\eta_t]=0$ and
$\mathbb{E}\bigl[|\eta_t|^{1+\epsilon}\bigr]\le\upsilon$ for some $\epsilon\in(0,1]$ and
finite $\upsilon>0$. 
Letting $x^\star \in \arg\max_{x\in[0, 1]^d}f(x)$ be an optimal
action,
the cumulative expected regret after $T$ rounds is
\[
    R_T \;=\; \sum_{t=1}^{T} \big( f(x^*) - f(x_t) \big).
        \]
Given $(\mathcal{A},\epsilon,\upsilon)$, the objective is to design a policy for sequentially selecting the points (i.e., $x_t$ for $t=1,\dotsc,T$) in order to minimize $R_T$. We focus on the Mat\'ern kernel, defined as follows:
\[K_{\nu, l}(x, x') := \frac{2^{1 - \nu}}{\Gamma(\nu)} \left(\frac{\|x - x'\|_2 \sqrt{2 \nu}}{l}\right)^{\nu} B_\nu\left(\frac{\|x - x'\|_2 \sqrt{2 \nu}}{l}\right),
\]
where $\Gamma$ is the Gamma function, $B_\nu$ is the modified Bessel function, and $(\nu,l)$ are parameters corresponding to smoothness and lengthscale.

We focus on the case that $\calA$ is a \emph{finite} subset of $[0,1]^d$, but it is well known (e.g., see \cite[Assumption 4]{Vak21}) that the resulting regret bounds extend to the continuous domain $[0,1]^d$ via a discretization argument with with $\log|\calA| = O(\log T)$.

\subsection{Proof of \pref{cor:Matern}}

We state a more precise version of \pref{cor:Matern} as follows.

\begin{theorem} \label{thm:matern_full}
    For any unknown reward function $f:\calA \rightarrow \bbR$ lying in the RKHS of the Matérn kernel with parameters $(\nu, l)$, for some finite set $\calA \subseteq [0, 1]^d$, assuming that $\max_{x \in \calA} |f(x)| \le 1$ and $\|f\|_K \le b$ for some $b > 0$, we have
    \begin{align*}
        M^*(\calA, T^{\frac{-2\epsilon}{1 + \epsilon}}, 1) \le C  T^{\epsilon \cdot \frac{d}{2\nu + d}}, 
    \end{align*}
    for some constant $C$, and \pref{alg:MEDE} achieves regret of
    \begin{align*}
            R_T(f,\calA) \le \left( C'_0 b + C'_1(1 + \upsilon)^{\frac{1}{1 + \epsilon}} \log(|\calA| T \log^2 T)^{\frac{\epsilon}{1 + \epsilon}} \right)  T^{1 - \frac{\epsilon}{1 + \epsilon}\frac{2\nu}{2\nu + d}}, 
    \end{align*}
    for some constants $C'_0, C'_1$. Note that the constants may depend on the kernel parameters $(\nu, l)$ and the dimension $d$. 
\end{theorem}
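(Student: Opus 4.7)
The proof has two parts: (i) kernelize \pref{alg:MEDE} so that the estimator $\phi(a)^\top A^{(\gamma)}(\lambda)^{-1} \phi(x)\, y$ and the design optimization can be carried out from kernel evaluations alone, and (ii) bound $M^*(\calA, \gamma, 1)$ for the Matérn kernel at $\gamma = T^{-2\epsilon/(1+\epsilon)}$, from which the regret claim follows directly by invoking the RKHS version of \pref{thm: main-upper}.

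For (i), following \cite{Cam21}, the Woodbury identity gives a closed-form expression for $\phi(a)^\top\bigl(\gamma I + \sum_i \lambda_i \phi(x_i)\phi(x_i)^\top\bigr)^{-1}\phi(x)$ in terms of kernel evaluations $k(\cdot,\cdot)$ and an inversion of the weighted Gram matrix, so every quantity accessed by the algorithm is computable in the RKHS. The proofs of \pref{thm: main-upper} and \pref{lem: estimator-bound} then transfer verbatim, with $\|f\|_K \le b$ playing the role of $\|\theta^*\|_2 \le b$, since the only ingredients are Cauchy--Schwarz, linearity, and moment bounds, all of which are Hilbert-space statements.

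For (ii), Jensen's inequality (exactly as in the proof of \pref{lem:M_bound}) reduces the task to bounding the G-optimal design value $\min_\lambda \max_{a \in \calA} \|\phi(a)\|^2_{A^{(\gamma)}(\lambda)^{-1}}$ in the RKHS. By the RKHS Kiefer--Wolfowitz-type argument used in \cite{Cam21}, this quantity is controlled by the effective dimension $d_{\mathrm{eff}}(\gamma) := \mathrm{tr}(\Sigma(\Sigma+\gamma I)^{-1})$ of the kernel integral operator $\Sigma$. For the Matérn kernel on $[0,1]^d$ with smoothness $\nu$, the operator eigenvalues decay as $\mu_j \sim j^{-(2\nu+d)/d}$; splitting the sum at $\mu_j \asymp \gamma$ gives $d_{\mathrm{eff}}(\gamma) = O(\gamma^{-d/(2\nu+d)})$. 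Substituting $\gamma = T^{-2\epsilon/(1+\epsilon)}$ and raising to the $(1+\epsilon)/2$ power yields $M^*(\calA, T^{-2\epsilon/(1+\epsilon)}, 1) = O(T^{\epsilon d/(2\nu+d)})$, which is the first claim.

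Inserting this into the RKHS version of \pref{thm: main-upper} with $\beta = 1$ gives
$$R_T = \widetilde{O}\bigl(T^{1/(1+\epsilon)} \cdot T^{\epsilon d/((1+\epsilon)(2\nu+d))}\bigr) = \widetilde{O}\bigl(T^{1 - \tfrac{\epsilon}{1+\epsilon}\cdot\tfrac{2\nu}{2\nu+d}}\bigr),$$
matching the claimed exponent (the two forms are algebraically equal). The main obstacle is step (ii): verifying that a design $\lambda$ \emph{supported on the finite set} $\calA$ achieves the operator-level effective-dimension bound. This requires $\calA$ to be dense enough to approximate the leading Matérn eigenfunctions; for $\calA$ a $\mathrm{poly}(T)$-cover of $[0,1]^d$ (the standard discretization noted in the problem setup), perturbation arguments comparing the empirical and population feature covariances add only polylogarithmic overhead that is absorbed in the $\widetilde{O}(\cdot)$.
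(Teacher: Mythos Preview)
Your proposal is correct and follows essentially the same route as the paper: kernelize via the Woodbury identity (citing \cite{Cam21}), carry \pref{thm: main-upper} over verbatim, reduce $M^*_{1+\epsilon}$ to the regularized G-optimal value via Jensen (as in \pref{lem:M_bound}), bound the latter by an effective-dimension trace using \cite{Cam21}, and evaluate this trace with the Mat\'ern polynomial eigenvalue decay via a threshold split at $j \asymp \gamma^{-1/\kappa}$.

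One minor divergence worth noting: your final paragraph flags as the ``main obstacle'' whether a design supported on the finite $\calA$ can match the \emph{operator}-level effective dimension, and you suggest perturbation arguments comparing empirical and population covariances. The paper sidesteps this entirely. It invokes \cite[Lemma~3]{Cam21} to bound $\max_{\calV}\min_\lambda \max_{v \in \calV}\|\phi(v)\|^2_{A^{(\gamma)}(\lambda)^{-1}}$ directly by $\mathrm{tr}\big(A(\lambda_D^*)(A(\lambda_D^*)+\gamma I)^{-1}\big) = \mathrm{tr}\big(K_{\lambda_D^*}(K_{\lambda_D^*}+\gamma I)^{-1}\big)$ at the D-optimal design, and then applies the Mat\'ern eigenvalue decay $\varphi_j \le c j^{-(2\nu+d)/d}$ (cited from \cite{Vak21}) \emph{directly to the eigenvalues of this empirical Gram object}, rather than to the integral operator. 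So no approximation or perturbation step between population and sample covariance is invoked; the eigenvalue decay is taken to hold for the weighted Gram matrix itself. This makes the argument cleaner than the route you outlined, though your version would also work.
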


We now proceed with the proof.  
We first argue that \pref{alg:MEDE} and \pref{thm: main-upper} can still be applied (with $x$ replacing $a$ and $f(x)$ replacing $a^\top \theta^*$) in the kernel setting.  The reasoning is the same as the case $\epsilon = 1$ handled in \cite{Cam21}, so we keep the details brief.

Recall that for any kernel $K$, there exists a (possibly infinite dimensional) feature map $\phi: \calA \rightarrow \mathcal{H}$ such that $K(x, x') = \phi(x)^\top \phi(x')$.  For any $\lambda \in \Delta_{\calA}$, we define $k_\lambda(\cdot) \in \bbR^{|\calA|}$ such that for $\psi \in \mathcal{H}$, $k_\lambda(\psi)_i := \sqrt{\lambda_i} \phi(x_i)^\top \psi$, and $K_\lambda \in \bbR^{|\calA| \times |\calA|}$ such that $(K_\lambda)_{i,j} := \sqrt{\lambda_i} \sqrt{\lambda_j} K(x_i, x_j)$.  Then similar to \cite[Lemma 2]{Cam21}, we have for any $\psi, \rho \in \mathcal{H}$ that
\begin{align*}
    \psi^\top A^{(\gamma)}(\lambda)^{-1} \rho = \gamma^{-1} \psi^\top \rho - \gamma^{-1} k_\lambda(\psi) (K_\lambda + I_{|\calA|})^{-1} k_\lambda(\rho).
\end{align*}
Then the gradient for the experimental design problem $\inf_{\lambda \in \Delta_{\calV}} \max_{v \in \calV} \|\phi(v)\|_{A^{(\gamma)} (\lambda)^{-1} }$ (which is an upper bound for our experimental design objective $M_{1 + \epsilon}(\lambda; \calV, \gamma, 1)$ by the proof of \pref{lem:M_bound}) can be computed efficiently.  Moreover, \pref{thm: main-upper} still holds because the the kernel setup can be viewed as a linear setup in an infinite-dimensional feature space (after applying the feature map $\phi$ to the action set), and our analysis does not use the finiteness of the dimension. 

Given \pref{thm: main-upper}, the main remaining step is to upper bound $M_{1+\epsilon}^*$.   To do so, we use the well-known polynomial eigenvalue decay of the Mat\'ern kernel.  Specifically, the $j$-th eigenvalue $\varphi_j$ satisfies $\varphi_j \le \order(j^{-\kappa})$ with $\kappa = \frac{2\nu + d}{d}$ (e.g., see \cite{Vak21}). 
We let $\lambda^*_{D} \in \arg\max_{\lambda \in \Delta_{\calA}} \log \det\left(A^{(\gamma)}(\lambda)\right)$, and proceed as follows:
 \begin{align*}
     M^*_{1 + \epsilon}(\calA, \gamma, 1)^{\frac{2}{1 + \epsilon}} &\le  \max_{\calV \in \calA} \inf_{\lambda \in \Delta_{\calV}} 2^{\frac{2}{1 + \epsilon}}\max_{v \in \calV} \|\phi(v)\|^2_{A^{(\gamma)} (\lambda)^{-1} } \tag{shown in the proof of \pref{lem:M_bound}} \\ 
     &\leq 4\tr\left(A(\lambda_D^*)(A(\lambda_D^*) + \gamma I)^{-1}\right) \tag*{ \cite[Lemma 3]{Cam21} }\\
    &= 4\tr\left(K_{\lambda_D^*} (K_{\lambda_D^*} + \gamma I)^{-1}\right) \\ 
    &= 4\sum_{j = 1}^{|\calA|} \frac{\varphi_j}{\varphi_j + \gamma} \\
    &\le 4\sum_{j=1}^{|\calA|} \frac{cj^{-\kappa}}{cj^{-\kappa} + \gamma} \tag{for some constant $c \ge 1$  dependent on $l, \nu, d$}\\
    &\le 4c \sum_{j \leq \gamma^{-\frac{1}{\kappa}}} \frac{j^{-\kappa}}{j^{-\kappa} + \gamma} + 4c \sum_{j > \gamma^{-\frac{1}{\kappa}}} \frac{j^{-\kappa}}{j^{-\kappa} + \gamma} \tag{$c \ge 1$} \\
    &\leq 4c\gamma^{-1/\kappa} + 4c\sum_{j > \gamma^{-\frac{1}{\kappa}}} \frac{j^{-\kappa}}{\gamma} \tag{dropping terms in denominators}  \\
    &\leq 4c\gamma^{-\frac{1}{\kappa}} + 4c(\gamma^{-\frac{1}{\kappa}})^{1 - \kappa} \frac{1}{(\kappa - 1)\gamma} \tag{bounding sum by integral; $\kappa > 1$} \\
    & = 4c\gamma^{-\frac{1}{\kappa}}\left(1 + \frac{1}{\kappa - 1}\right) \\
    & = 4 c \frac{2\nu + d}{2\nu} T^{\frac{2 \epsilon}{1 + \epsilon} \frac{d}{2 \nu + d}} \tag{$\gamma = T^{\frac{- 2 \epsilon}{1 + \epsilon}}$ and $\kappa = \frac{2\nu + d}{d}$}.
 \end{align*}
Taking the square root on both sides gives $M^*_{1 + \epsilon}(\calA, \gamma, 1)^{\frac{1}{1 + \epsilon}} = \otil\big( T^{\frac{\epsilon}{1 + \epsilon} \frac{d}{2 \nu + d}}  \big)$, and multiplying by $\otil(T^{\frac{1}{1+\epsilon}}) = \otil(T^{1 - \frac{\epsilon}{1+\epsilon}})$ from the regret bound in \pref{thm: main-upper} gives $\otil(T^{1 - \frac{\epsilon}{1+\epsilon} \cdot \frac{ 2 \nu}{ 2 \nu + d}})$ regret as claimed in \pref{cor:Matern}.  By the same reasoning but keeping track of the logarithmic terms, we obtain the regret bound stated in \pref{thm:matern_full}.

\subsection{Comparisons of Bounds} \label{sec:kernel_cmp}
{\bf Comparison to existing lower bound.} In \pref{fig:Matern}, we compare our regret upper bound to the lower bound of $\Omega\big( T^{\frac{\nu+d\epsilon}{\nu(1+\epsilon)+d\epsilon}} \big)$ proved in \cite{Cho19}.  We see that the upper and lower bounds coincide in certain limits and extreme cases:
\begin{itemize}[leftmargin=5ex]
    \item As $\nu/d \to \infty$, the regret approaches $T^{\frac{1}{1+\epsilon}}$ scaling, which matches the regret of linear heavy-tailed bandits in constant dimension.
    \item As $\nu/d \to 0$ and/or $\epsilon \to 0$, the regret approaches trivial linear scaling in $T$.
    \item When $\epsilon = 1$, the regret scales as $\widetilde{\Theta}\big( T^{\frac{\nu + d}{2\nu + d}} \big)$, which matches the optimal scaling for the sub-Gaussian noise setting \cite{Sca17a}.  As we discussed earlier, this finite-variance setting was already handled in \cite{Cam21}.
\end{itemize}
For finite $\nu/d$ and fixed $\epsilon \in (0,1)$, we observe from \pref{fig:Matern} that gaps still remain between the upper and lower bounds, but they are typically small, especially when $\nu / d$ is not too small.

{\bf Comparison to existing upper bound.} In \cite{Cho19}, a regret upper bound of $\otil(\gamma_T T^{\frac{2+\epsilon}{2(1+\epsilon)}})$ was established, where $\gamma_T$ is an \emph{information gain} term that satisfies $\gamma_T = \otil(T^{\frac{d}{2\nu+d}})$ for the Mat\'ern kernel \cite{Vak20a}.  We did not plot this upper bound in \pref{fig:Matern}, because its high degree of suboptimality is easier to describe textually:
\begin{itemize}[leftmargin=5ex]
    \item For $\nu/d = 1/4$ and $\nu/d = 1$, their bound exceeds the trivial $\order(T)$ bound for all $\epsilon \in (0,1]$.
    \item For $\nu/d = 4$, their bound still exceeds $\order(T)$ for $\epsilon \lesssim 0.28$, and is highly suboptimal for larger $\epsilon$.
    \item As $\nu/d \to \infty$, the $\gamma_T$ term becomes insignificant and their bound simplifies to $\otil(T^{\frac{2+\epsilon}{2(1+\epsilon)}})$, which is never better than $\otil(T^{3/4})$ (achieved when $\epsilon = 1$).  
    \item A further weakness when $\epsilon = 1$ is that the optimal $\gamma_T$ dependence should be $\sqrt{\gamma_T}$ rather than linear in $\gamma_T$ \cite{Sca17a,Cam21}.
\end{itemize}
For the \emph{squared exponential kernel}, which has exponentially decaying eigenvalues rather than polynomial, these weaknesses were overcome in \cite{Cho19} using kernel approximation techniques, to obtain an optimal $\otil(T^{\frac{1}{1+\epsilon}})$ regret bound.  Our main contribution above is to establish a new state of the art for the Mat\'ern kernel, which is significantly more versatile in being able to model both highly smooth (high $\nu$) and less smooth (small $\nu$) functions.

\begin{figure}[!tbp]
  \centering
  \includegraphics[width=0.5\linewidth]{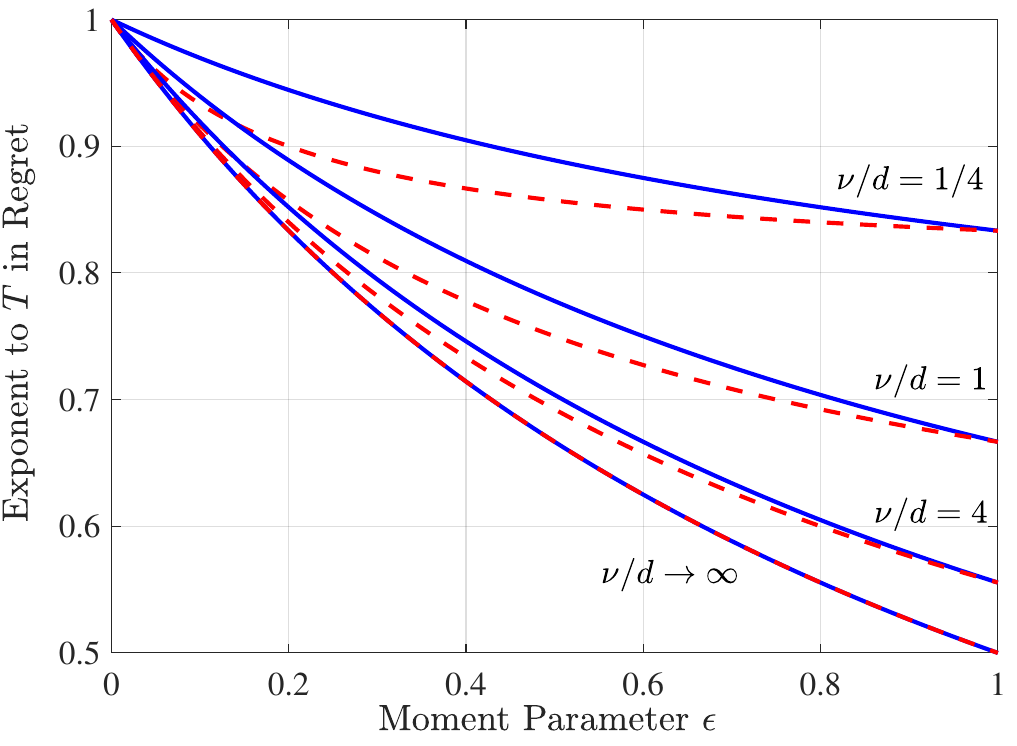}
    \caption{Comparison of our regret upper bound (solid) and the lower bound of \cite{Cho19} (dashed).  We plot the exponent $c$ such that the regret bound has dependence $T^{c}$, with the 4 pairs of curves corresponding to $\nu/d \in \{0.25,1,4\}$ and $\nu/d \to \infty$.}
  \label{fig:Matern}
\end{figure}

\section{Numerical Experiments}
\label{app: numer-exp}

In this section, we perform a simple proof-of-concept experiment to demonstrate that our algorithm can outperform existing methods as the ambient dimension increases.  However, we emphasize that our main contributions are theoretical, and we leave detailed experimental studies for future work.

We conduct experiments with horizon $T=100{,}000$ and action set size $N=2d$. The true parameter is $\theta^\star=\tfrac{1}{\sqrt{d}}\mathbf{1}$ (so $\|\theta^\star\|_2=1$), and the action set is the subset of the normalized hypercube given by the signed coordinate directions $\mathcal{A}=\{\pm e_i\}_{i=1}^d$. Rewards follow $r_t=x_t^\top \theta^\star+\eta_t$ with heavy-tailed noise $\eta_t \sim \text{ParetoII}(\alpha=2,\sigma=1)-\mathbb{E}[\text{ParetoII}(\alpha=2,\sigma=1)]$ (centered to zero mean). The proposed algorithm is instantiated with input $\epsilon=0.5$ and evaluated against the \emph{Confidence Region with Truncated Mean} (CRTM) Algorithm in ~\cite{XueWangWanYiZhang2023}. Performance is measured via cumulative pseudo-regret $\sum_{t=1}^T\big(x_t^{\star\top}\theta^\star-x_t^\top\theta^\star\big)$, aggregated over $10$ independent repetitions (with identical arm sets and independent noise).

\paragraph{Results.}
As shown in \pref{fig:placeholder}, for all $d \ge 40$ \pref{alg:MEDE} achieves comparable or lower mean regret than CRTM, and notably, the gap widens as $d$ increases. Both procedures remain sublinear in $T$ in this controlled setting; however, the regret of \pref{alg:MEDE} grows more slowly with $d$, consistent with the guarantees in \pref{thm: main-upper}.

\begin{figure}
    \centering
    \includegraphics[width=0.75\linewidth]{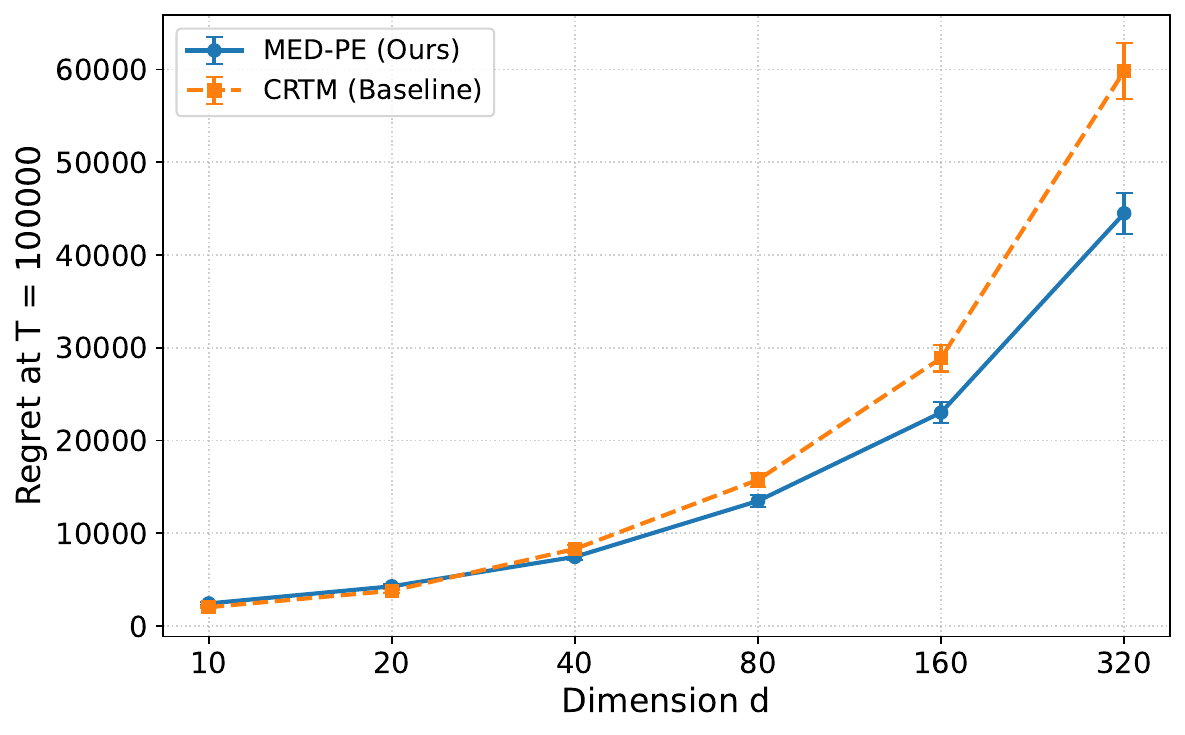}
    \caption{Regret vs dimension $d$ with time horizon $T = 100,000$, and $N = 2d$ arms}
    \label{fig:placeholder}
\end{figure}

\section{Proof of \pref{thm: lower-finite} (Finite-Arm Lower Bound)}
\label{app: finite-lower}
Consider $\log(\cdot)$ with base 2, 
and define $m$ to be the smallest integer such that $\frac{m}{\log m} \ge \frac{d}{\log n}$.  From the assumption $n \in [d, 2^{\lfloor\frac{d}{4}\rfloor}]$ we can readily verify that $d > 4$ and $m \in [4,d]$. For convenience, we assume that $d$ is a multiple of $m$, since otherwise we can form the construction of the lower bound with $d' = d - (d \text{ mod } m)$ and pad the action vectors with zeros. Letting $d_i := (i - 1) m$, we define the action set and the parameter set as follows for some $\Delta$ to be specified later: 
\begin{gather*}
    \mathcal{A} := \bigg\{a \in \{0, 1\}^{d}: \sum_{j = d_i + 1}^{d_{i + 1}} a_{j} = 1, ~~\forall i \in [d/m] \bigg\} \\
    \theta^* \in \Theta := \left\{ \theta \in \{\Delta, 2\Delta\}^{d} : \sum_{j = d_i + 1}^{d_{i + 1}} \theta_j = (m + 1)\Delta, ~~\forall i \in [d/m]  \right\}.
\end{gather*}    
In simple terms, the $d$-dimensional vectors are arranged in $d/m$ groups of size $m$; each block in $a \in \calA$ has a single entry of 1 (with 0 elsewhere), and each block in $\theta^*$ has a single entry of $2\Delta$ (with $\Delta$ elsewhere). Observe that if $\Delta \le \min(\frac{m}{4d}, \frac{1}{4\sqrt{d}} )$, then $\|\theta^*\|_2 \le 1$ and $x^\top \theta^* \le 1$ as required.  
Moreover, we have $|\calA| = m^{\frac{d}{m}}$, and thus $\log |\calA| = \frac{d}{m} \log m \le \log n$ by the definition of $m$.

Similar to \pref{thm: lower-hypercube}, we let the reward distribution be
    $$
        y(x) = \begin{cases}
            (\frac{1}{\gamma})^{\frac{1}{\epsilon}} & \text{w.p.~\,}\gamma^{\frac{1}{\epsilon}} \theta^\top x \\
            0 & \text{w.p.~\,}1 - \gamma^{\frac{1}{\epsilon}} \theta^\top x
        \end{cases} $$
with $\gamma := 2 \Delta \frac{d}{m}$.  The choices of $\calA$ and $\Theta$ give $\theta^\top x \le 2\Delta \frac{d}{m}$, so by the same reasoning as in \pref{thm: lower-hypercube}, the $(1 + \epsilon)$-moment of the reward distribution is bounded by $1$.

Let $\text{ind}_i(x) := \arg\max_{b \in [m]} (x_{d_i + b})$ for fixed $x \in \calA \cup \Theta$, and define $T_{i,b} := |\{t : x_{t, d_i + b} = 1\}|$. Moreover, define $t_{\rm U}$ to be a random integer drawn uniformly from $[T]$, which immediately implies that $\P_\theta[x_{t_{\rm U}, d_i + b} = 1] = \frac{\E_\theta[T_{i, b}]}{T}$.  Then,
    \begin{align*}
        R_T(\calA,\theta) &= \sum_{t = 1}^T \sum_{i = 1}^{d/m} \big( \Delta - \Delta \mathbb{I}\{\text{ind}_i(\theta) = \text{ind}_i(x_t)\} \big)
        \\ &=
            \Delta \sum_{i = 1}^{d/m} \big( T - \E_{\theta}\big[ T_{i, \text{ind}_i(\theta)} \big] \big) 
        \\ &=
        \Delta T \sum_{i = 1}^{d/m} \big( 1 - \P_\theta[x_{t_{\rm U}, d_i + \text{ind}_i(\theta)} = 1] \big).
    \end{align*}
    For fixed $\theta \in \Theta$ and $i \in [\frac{d}{m}]$, and any $b \in [m]$, we define $\theta^{(b)} \in \Theta$ to have entries given by $\theta^{(b)}_{j} = \begin{cases}
        \Delta + \Delta\mathbb{I}\{j = d_i + b\} & j \in [d_i + 1, d_{i + 1}] \\ \theta_j &\text{otherwise}
    \end{cases}$; and define the base parameter $\theta^{(0)}$ with entries $\theta^{(0)}_{j} = \begin{cases}
        \Delta & j \in [d_i + 1, d_{i + 1}] \\ \theta_j &\text{otherwise}
    \end{cases}$. Note that $\theta^{(\text{ind}_i(\theta))} = \theta$, and that the dependence of $\theta^{(b)}$ on $i$ is left implicit.
    
    Then, for $b \in [m]$, we have
\begin{align*}
    \P_{\theta^{(b)}}[x_{t, d_i + b} = 1] &\le \P_{\theta^{(0)}}[x_{t, d_i + b} = 1] + \sqrt{\frac{1}{2} \KL(\P_{\theta^{(0)}} \| \P_{\theta^{(b)}}) } \tag{Pinsker's Inequality}
    \\ &= \P_{\theta^{(0)}}[x_{t, d_i + b} = 1] + \sqrt{ \frac{1}{2} \E_{\theta^{(0)}} \left[ \sum_{t = 1}^T \KL\left( \text{Ber}(\gamma^{\frac{1}{\epsilon}} {\theta^{(0)}}^\top x_t) \| \text{Ber}(\gamma^{\frac{1}{\epsilon}} {\theta^{(b)}}^\top x_t)\right) \right] }. \tag{Chain rule}
\end{align*}
Similarly to the proof of \pref{thm: lower-hypercube}, applying $\KL(\text{Ber}(p) \| \text{Ber}(q)) \le \frac{(p - q)^2}{q(1 - q)}$ along with $\Delta d/m \le \theta^\top x \le 2\Delta d/m$ and $|(\theta^{(0)}-\theta^{(b)})^\top x| \le \Delta$ gives
    \begin{align*}
        &\KL\left( \text{Ber}(\gamma^{\frac{1}{\epsilon}} {\theta^{(0)}}^\top x_t) \| \text{Ber}(\gamma^{\frac{1}{\epsilon}} {\theta^{(b)}}^\top x_t)\right) \le \frac{2 (\gamma^{\frac{1}{\epsilon}} (\theta^{(0)} - \theta^{(b)})^\top x_t)^2}{\gamma^{\frac{1}{\epsilon}} {\theta^{(b)}}^\top x_t}
        \\ & \qquad \le \frac{2^{\frac{2 + \epsilon}{\epsilon}}\Delta^{\frac{2}{\epsilon}} (\frac{d}{m})^{\frac{2}{\epsilon}} \Delta^2 \mathbb{I}\{x_{t,d_i + b} = 1\}}{2^{\frac{1}{\epsilon}}\Delta^{\frac{1 + \epsilon}{\epsilon}} (\frac{d}{m})^{\frac{1 + \epsilon}{\epsilon}}} = 2^{\frac{1 + \epsilon}{\epsilon}} \Delta^{\frac{1 + \epsilon}{\epsilon}} \left(\frac{d}{m}\right)^{\frac{1 - \epsilon}{\epsilon}} \mathbb{I}\{x_{t,d_i + b} = 1\}.
    \end{align*}
    We set $\Delta := \frac{1}{8}\left(\frac{d}{m}\right)^{\frac{\epsilon - 1}{1 + \epsilon}} \left(\frac{T}{m}\right)^{\frac{-\epsilon}{1 + \epsilon}}$.  We claim that under this choice, the condition  $T \ge 4^{\frac{1 + \epsilon}{\epsilon}}d^{\frac{1 + \epsilon}{\epsilon}}$ implies $\Delta \le \min(\frac{m}{4d}, \frac{1}{4\sqrt{d}})$, as we required earlier.  To see this, we rewrite $\Delta = \frac{1}{8} d^{\frac{\epsilon-1}{1+\epsilon}} m^{\frac{1}{1+\epsilon}} T^{-\frac{\epsilon}{1+\epsilon}}$ and substitute the bound on $T$ to obtain $\Delta \le \frac{1}{32} d^{\frac{\epsilon-1}{1+\epsilon}} m^{\frac{1}{1+\epsilon}} d^{-1}$.  Dividing both sides by $m$ gives $\frac{\Delta}{m} \le \frac{1}{32d}$, whereas applying $m \le d$ gives $\Delta \le \frac{1}{32}d^{-\frac{1}{1+\epsilon}} \le \frac{1}{32\sqrt{d}}$.
    
    
    Combining the preceding two display equations and averaging over all $b \in m$, we have 
    \begin{align*}
        \frac{1}{m} \sum_b\P_{\theta^{(b)}}[x_{t, d_i + b} = 1] &\le \frac{1}{m} + \frac{1}{m} \sum_{b} \sqrt{ 2^{\frac{1 + \epsilon}{\epsilon}} \Delta^{\frac{1 + \epsilon}{\epsilon}} \left(\frac{d}{m}\right)^{\frac{1 - \epsilon}{\epsilon}} \E_{\theta^{(0)}}[T_{i,b}]}  \\
        \\ & \le \frac{1}{m} + \sqrt{ 2^{\frac{1 + \epsilon}{\epsilon}}\frac{1}{m} \Delta^{\frac{1 + \epsilon}{\epsilon}} \left(\frac{d}{m}\right)^{\frac{1 - \epsilon}{\epsilon}} \sum_b \E_{\theta^{(0)}}[T_{i,b}]} \tag{Jensen, $\sum_b T_{i,b} = T$ \& choice of $\Delta$}
         \le \frac{1}{m} + \frac{1}{2}.
    \end{align*}
    Averaging over all $\theta \in \Theta$, summing over $i \in [d/m]$, and recalling that $m \ge 4$, we obtain
    \begin{align*}
        \frac{1}{|\Theta|} \sum_{\theta \in \Theta} \sum_{i = 1}^{d/m} \big( 1 - \P_\theta[x_{t, d_i + \text{ind}_i(\theta)} = 1] \big) \ge \frac{d}{m} \Big(1 - \frac{1}{m} - \frac{1}{2}\Big) \ge \frac{d}{4m}.
    \end{align*}
    Hence, there exists $\theta^* \in \Theta$ such that $ \sum_{i = 1}^{d/m} \big( 1 - \P_{\theta^*}[x_{t, d_i + \text{ind}_i(\theta^*)} = 1]\big) \ge \frac{d}{4m}$. 
 Substituting into our earlier lower bound on $R_T$ and again using our choice of $\Delta$, we obtain
    \begin{align*}
        R_T(\calA, \theta^*) \ge \frac{d}{4m} \Delta T = \frac{1}{32} d^{\frac{\epsilon}{1 + \epsilon} } \left(\frac{d}{m}\right)^{\frac{\epsilon}{1 + \epsilon}} T^{\frac{1}{1 + \epsilon}}.
    \end{align*}
    Since $f(x) = \frac{x}{\log x}$ is increasing for $x \ge e$, and $m \in [4,d]$, the definition of $m$ gives the following:
    \begin{align*}
        \frac{d}{\log n} > \frac{m - 1}{\log(m - 1)} > \frac{m - 1}{\log m} \ge \frac{m - 1}{\log d}.
    \end{align*}
    Rearranging the above, we obtain $\frac{d}{m} > \frac{\log n}{\log d} \left(1 - \frac{1}{m}\right) \ge \frac{\log n}{2\log d}$, completing the proof.




\end{document}